\newtheorem{theorem}{Theorem}
\newtheorem{proposition}[theorem]{Proposition}
\newtheorem{definition}[theorem]{Definition}
\newtheorem{example}[theorem]{Example}
\newtheorem{remark}[theorem]{Remark}
\DeclarePairedDelimiterX\braket[2]{\langle}{\rangle}{#1 \delimsize\vert #2}
\DeclareMathOperator*{\argmin}{arg\,min}
\newcommand{\x}{\mathbf{x}}
\newcommand{\xF}{\mathbf{x}^\texttt{F}}
\newcommand{\xCE}{\mathbf{x}^\texttt{CE}}
\newcommand{\z}{\mathbf{z}}
\newcommand{\X}{\mathbf{X}}
\newcommand{\K}{\mathbf{K}}
\renewcommand{\k}{\mathbf{k}}
\newcommand{\Id}{\mathbf{I}}
\newcommand{\E}{\mathbb{E}}
\newcommand{\AF}{\mathbb{A}^\texttt{F}}
\newcommand{\aF}{a^\texttt{F}}
\newcommand{\R}{\mathbb{R}}
\newcommand{\M}{\mathcal{M}}
\newcommand{\N}{\mathcal{N}}
\renewcommand{\ss}{\sigma^2}
\newcommand{\nd}{\text{nd}}
\renewcommand{\d}{\text{d}}
\newcommand{\pa}{\text{pa}}
\newcommand{\dist}{\text{dist}}
\newcommand{\costF}{\text{cost}^\texttt{F}}
\newcommand{\subto}{\text{subject to}}
\newcommand{\U}{\mathbf{U}}
\renewcommand{\u}{\mathbf{u}}
\newcommand{\G}{\mathcal{G}}
\newcommand{\I}{\mathcal{I}}
\renewcommand{\th}{{\bm\theta}}
\newcommand{\muF}{\mu^\texttt{F}}
\newcommand{\ssF}{s^\texttt{F}}
\newcommand{\xSCF}{\mathbf{x}^\texttt{SCF}}
\newcommand{\XSCF}{\mathbf{X}^\texttt{SCF}}
\newcommand{\SH}{\mathbf{S}}
\newcommand{\MH}{\mathcal{M}}
\newcommand{\Mstar}{\mathcal{M}_\star}
\newcommand{\Sstar}{\mathbf{S}_\star}
\newcommand{\CATEstar}{\textsc{cate}_\star}
\newcommand{\Validstar}{\text{Valid}_\star}
\newcommand{\lin}{\textsc{lin}}
\newcommand{\KR}{\textsc{kr}}
\newcommand{\KLD}[2]{D_{\mathrm{KL}} \left( \left. \left. #1 \right|\right| #2 \right) }
\newcommand{\f}{\mathbf{f}}
\newcommand{\given}{\vert}
\newcommand{\0}{\mathbf{0}}
\newcommand{\LCB}{\textsc{lcb}}
\newcommand{\fig}{Fig.~}
\newcommand{\norm}[1]{\left\lVert#1\right\rVert}
\newcommand{\addRef}[1][]{{\color{red} (Add ref)}}
\newcommand{\cate}{\textsc{cate}}
\newcommand{\cvae}{\textsc{cvae}}
\newcommand{\gp}{\textsc{gp}}
\newcommand{\gpscm}{\textsc{gp-scm}}
\newcommand{\scm}{\textsc{scm}}
\newcommand{\anm}{\textsc{anm}}
\newcommand{\enc}{E}
\newcommand{\dec}{D}
\title{Algorithmic recourse under imperfect causal knowledge: a probabilistic approach}
\author{%
  Amir-Hossein Karimi\footnotemark[1]\, $^{1,2}$\quad
  Julius von K\"ugelgen\thanks{Equal contribution}\, $^{1,3}$\quad
  Bernhard Sch\"olkopf $^{1}$\quad
  Isabel Valera $^{1,4}$ \\ \\
  $^1$Max Planck Institute for Intelligent Systems, T\"ubingen, Germany \\
  $^2$Max Planck ETH Center for Learning Systems, Z\"urich, Switzerland \\
  $^3$Department of Engineering, University of Cambridge, United Kingdom \\
  $^4$Department of Computer Science, Saarland University, Saarbr\"ucken, Germany \\
  \texttt{\{amir, jvk, bs, ivalera\}@tue.mpg.de}
}
\begin{document}

\renewcommand{\baselinestretch}{1}
\newcommand{\negspace}{0em}

\maketitle
\begin{abstract}
Recent work has discussed the limitations of counterfactual explanations to recommend actions for algorithmic recourse, and argued for the need of taking causal relationships between features into consideration.
Unfortunately, in practice, the true underlying
structural
causal model 
is generally unknown.
In this work, we first show that it is impossible to guarantee recourse without access to the true structural equations.
To address this limitation, we propose two probabilistic approaches to select optimal actions that achieve recourse with  high probability
given limited causal knowledge (e.g., only the causal graph). 
The first captures uncertainty over structural equations under additive Gaussian noise, and uses Bayesian model averaging to estimate the counterfactual distribution. 
The second removes any assumptions on the structural equations by instead computing the average effect of recourse actions on individuals similar to the person who seeks recourse, leading to a novel subpopulation-based interventional notion of recourse.
We then derive a gradient-based procedure for selecting optimal recourse actions, and empirically show that the proposed approaches lead to more reliable  recommendations under imperfect 
causal knowledge than non-probabilistic baselines.
\end{abstract}

\vspace{\negspace}
\section{Introduction}
\label{sec:introduction}
\vspace{\negspace}
As machine learning algorithms are increasingly used to assist consequential decision making in a wide range of real-world settings~\citep{perry2013predictive, romero2011preface}, providing explanations for the decision of these black-box models becomes crucial~\citep{bhatt2020explainable, wachter2017right}.
A popular approach is that of (nearest) \textit{counterfactual explanations}, which refer to the closest feature instantiations that would have resulted in a changed prediction~\citep{wachter2017counterfactual}.
While providing some insight (explanation) into the underlying black-box classifier, such counterfactual explanations do not directly translate into actionable recommendations to individuals for obtaining a more favourable prediction\cite{karimi2020algorithmic, barocas2020hidden}---a related task referred to as \textit{algorithmic recourse}~\citep{ustun2019actionable,venkatasubramanian2020philosophical, joshi2019towards,karimi2020survey}.
Importantly, prior work on both counterfactual explanations and algorithmic recourse treats features as independently manipulable inputs,
thus \textit{ignoring the causal relationships between features}.

In this context, recent work~\citep{karimi2020algorithmic} has argued for the need of taking into account the causal structure between features to find a minimal set of actions (in the form of interventions) that guarantees recourse. 
However, while this approach 
is theoretically sound, it involves computing counterfactuals in the \emph{true underlying structural causal model} (\scm) \cite{pearl2009causality}, and thus relies on {strong impractical assumptions};
specifically, 
it requires complete knowledge
of the true structural equations.
While for many applications it is possible to draw a causal diagram from expert knowledge, assumptions about the form of structural equations are, in general, not testable and may thus not hold in practice~\citep{peters2017elements}.
 As a result, counterfactuals computed using a misspecified causal model may be inaccurate and recommend actions that are sub-optimal or, even worse, ineffective to achieve recourse.


In this work, we focus on the problem of algorithmic recourse when only \textit{limited causal knowledge} is available (as it is generally the case). 
To this end, we propose two probabilistic approaches which allow to relax the strong assumption of a fully-specified \scm\ made in \cite{karimi2020algorithmic}. 
In the first approach, we assume that, while the underlying \scm{} is unknown, it belongs to the family of additive Gaussian noise models~\citep{hoyer2009nonlinear, peters2014identifiability}.
We then make use of Gaussian processes (\gp s) \citep[][]{williams2006gaussian} to average predictions over a whole 
family of \scm{}s and thus to obtain a distribution over \textit{counterfactual} outcomes which forms the basis for \emph{individualised} algorithmic recourse. %
%
%
The second approach considers a different \emph{subpopulation-based} notion of algorithmic recourse by estimating 
the effect of \textit{interventions} for  individuals similar to the one for which we aim to achieve recourse.
It thus addresses a different (rung 2) target quantity than the counterfactual/individualised (rung 3) approach which allows us to further relax our assumptions by removing any assumptions on the form of the structural equations. %
This approach is based on the idea of the conditional average treatment effect (\cate) \citep[][]{abrevaya2015estimating}, and relies on conditional variational autoencoders (\cvae s)~\citep[][]{sohn2015learning} to estimate the interventional distribution.
In both cases, we assume that the causal graph is known or can be postulated from expert knowledge, as without such an assumption
causal reasoning from observational data is 
 not possible \citep[][Prop.\ 4.1]{peters2017elements}.


In more detail, 
we first demonstrate as a motivating negative result that recourse guarantees are only possible if the true \scm\ is known (\cref{sec:motivating_example}). 
Then, we introduce two probabilistic approaches for handling different levels of uncertainty in the structural equations (\cref{sec:recourse_via_probabilistic_counterfactuals} and \cref{sec:subpopulation_based_recourse_via_CATE}), and propose a gradient-based method
to find a set of actions that achieves recourse with a given probability at minimum cost (\cref{sec:optimisation_problem}). 
%
%
%
Our experiments (\cref{sec:experiments}) on synthetic and semi-synthetic loan approval data, show the need for probabilistic approaches to achieve algorithmic recourse in practice, as point estimates of the underlying true \scm{}  often propose invalid recommendations or achieve recourse only at higher cost. Importantly, our results also show that subpopulation-based recourse is the right approach to adopt when assumptions such as additive noise do not hold.
A user-friendly implementation of all methods that only requires specification of the causal graph and a training set is available at \url{https://github.com/amirhk/recourse}.


\vspace{\negspace}
\section{Background and related work}
\label{sec:background}
\vspace{\negspace}

\paragraph{Causality: structural causal models, interventions, and counterfactuals.}
\label{sec:background_causality}

To reason formally about causal relations between features $\X=\{X_1, ..., X_d\}$, we adopt the \emph{structural causal model} (\scm) framework~\cite{pearl2009causality}.\footnote{Also known as non-parametric structural equation model with independent errors (\textsc{npsem-ie}).}
Specifically, we assume that the data-generating process of $\X$ is described by an (unknown) underlying \scm\ $\M$ of the general form
\begin{equation}
    \label{eq:M0_oracle}
    \M=(\mathbf{S}, P_{\U}), \quad \mathbf{S}=\{X_r := f_r(\X_{\pa(r)}, U_r)\}_{r=1}^d, \quad P_\U = P_{U_1}\times \ldots\times P_{U_d},
\end{equation}
where the structural equations $\mathbf{S}$ are a set of assignments generating each observed 
variable $X_r$ as a deterministic function $f_r$ of its causal parents $\X_{\pa(r)}\subseteq \X \setminus X_r$ and an unobserved 
noise variable $U_r$. 
The assumption of mutually independent noises (i.e., a fully factorised $P_\U$)  entails that there is no hidden confounding and is referred to as \emph{causal sufficiency}.
An \scm\ is often illustrated by its associated causal graph $\G$,
which is obtained by drawing a directed edge from each node in $\X_{\pa(r)}$ to $X_r$ for $r\in[d]:=\{1,\ldots, d\}$, see \fig \ref{fig:SCM} and \ref{fig:causal_graph} for an example.
We assume  throughout that $\G$ is acyclic.
In this case, $\M$ implies a unique observational distribution $P_\X$, which factorises over $\G$, defined as the push-forward of $P_\U$ via $\mathbf{S}$.\footnote{I.e., for $r\in[d]$, $P_{X_r|\X_{\pa(r)}}(X_r|\X_{\pa(r)}):=P_{U_r}(f_r^{-1}(X_r|\X_{\pa(r)}))$, where $f_r^{-1}(X_r|\X_{\pa(r)})$ denotes the pre-image of $X_r$ given $\X_{\pa(r)}$ under $f_r$, i.e., $f_r^{-1}(X_r|\X_{\pa(r)}):=\{u\in\mathcal{U}_r:f_r(\X_{\pa(r)},u)=X_r\}$.}

Importantly, the \scm\ framework also entails \textit{interventional distributions} describing a situation in which some variables are manipulated externally.
E.g., using the \textit{do}-operator, an intervention which fixes $\X_\I$ to $\th$ (where $\I\subseteq [d]$) is denoted by $do(\X_\I=\th)$.
The corresponding distribution of the remaining variables $\X_{-\I}$ can be computed 
by replacing the structural equations for $\X_\I$ in $\mathbf{S}$ to obtain the new set of equations $\mathbf{S}^{do(\X_\I=\th)}$. 
The interventional distribution $P_{\X_{-\I}|do(\X_\I=\th)}$ is then given by the observational distribution implied by the manipulated \scm\ $\left(\mathbf{S}^{do(\X_\I=\th)}, P_\U\right)$.

Similarly, an \scm\ also implies distributions over \textit{counterfactuals}---statements about a world in which a hypothetical intervention was performed \emph{all else being equal}.
For example, \emph{given} observation $\xF$ we can ask what would have happened if  $\X_\I$ had instead taken the value $\th$.
We denote the counterfactual variable by $\X(do(\X_\I=\th))|\xF$, whose distribution can be computed in three steps~\cite{pearl2009causality}:

\textit{1. Abduction}: compute the posterior distribution over background variables given  $\xF$, $P_{\U|\xF}$;\\
\textit{2. Action:} perform the intervention to obtain the new structural equations $\mathbf{S}^{do(\X_\I=\th)}$; and,\\
\textit{3. Prediction:} 
$P_{\X(do(\X_\I=\th))|\xF}$
is the distribution induced by 
the resulting \scm\   
$\left(\mathbf{S}^{do(\X_\I=\th)}, P_{\U|\xF}\right)$.

\newcommand{\xshift}{2.25em}
\newcommand{\yshift}{2.75em}
\begin{figure}[tb]
    \begin{subfigure}[b]{0.3\textwidth}
        \centering
        \begin{tikzpicture}
            \centering
            \node (X_1) [latent] {$X_1$};
            \node (X_2) [latent, below=of X_1, xshift=-\xshift, yshift=\yshift] {$X_2$};
            \node (X_3) [latent, below=of X_1, xshift=\xshift, yshift=\yshift] {$X_3$};
            \node (h) [det, below=of X_2, xshift=\xshift, yshift=\yshift] {$h$};
            \edge{X_1, X_2, X_3}{h};
        \end{tikzpicture} 
        \caption{Classifier-centric view}
        \label{fig:independent_features}
    \end{subfigure}
    \begin{subfigure}[b]{0.4\textwidth}
        \centering
        \begin{align*}
            \mathbf{S}&=
            \left.
            \begin{cases}
                X_1 := f_1(U_1),\\
                X_2 := f_2(X_1, U_2),\\
                X_3 := f_3(X_1, X_2, U_3)
            \end{cases}
            \hspace{-1em}\right\}\\[.5em]
            P_\U&=P_{U_1}\times P_{U_2} \times P_{U_3}
        \end{align*}
        \caption{ $\M=(\mathbf{S}, P_\U)$}
        \label{fig:SCM}
    \end{subfigure}%
    \begin{subfigure}[b]{0.3\textwidth}
        \centering
        \begin{tikzpicture}
            \centering
            \node (X_1) [latent] {$X_1$};
            \node (X_2) [latent, below=of X_1, xshift=-\xshift, yshift=\yshift] {$X_2$};
            \node (X_3) [latent, below=of X_1, xshift=\xshift, yshift=\yshift] {$X_3$};
            \node (h) [det, below=of X_2, xshift=\xshift, yshift=\yshift] {$h$};
            \edge {X_1, X_2, X_3} {h};
            \edge{X_1}{X_2};
            \edge{X_1, X_2}{X_3};
        \end{tikzpicture}    
        \caption{Causal graph $\G$ for $\M$}
        \label{fig:causal_graph}
    \end{subfigure}
    \caption{A view commonly adopted for counterfactual explanations (a) treats features as independently manipulable inputs to a given fixed and deterministic classifier $h$. In the causal approach to algorithmic recourse taken in this work, we instead view variables as causally related to each other by a structural causal model (\scm) $\M$ (b) with associated causal graph $\G$ (c).}
    \label{fig:example}
\end{figure}
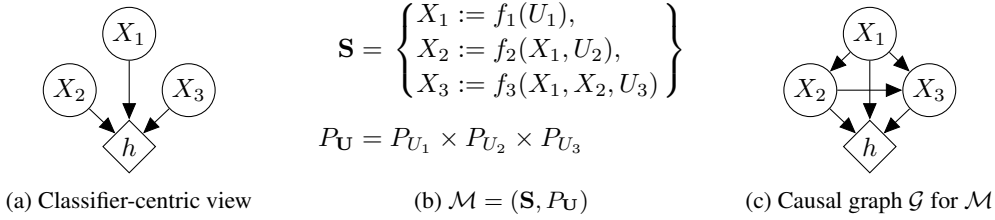

\paragraph{Explainable ML: ``counterfactual'' explanations and (causal) algorithmic recourse.}
\label{sec:background_XAI}
Assume that we are given a binary probabilistic classifier $h:\mathcal{X}\rightarrow[0,1]$ trained to make decisions about i.i.d.\ samples from the data distribution $P_{\X}$.\footnote{Following the related literature, we consider a binary classification task by convention; most of our considerations extend to multi-class classification or regression settings as well though.}
For ease of illustration, we adopt the setting of loan approval as a running example, i.e., $h(\x)\geq0.5$ denotes that a loan is granted and $h(\x)<0.5$ that it is denied. 
For a given individual $\xF$ that was denied a loan, $h(\xF)<0.5$, 
we aim to answer the following questions: ``Why did individual $\xF$ not get the loan?'' and ``What would they have to change, preferably with minimal effort, to increase their chances for a future application?''.

A popular approach to this task is to find so-called (nearest) \textit{counterfactual explanations}~\citep{wachter2017counterfactual}, where the term ``counterfactual'' is meant in the sense of the closest possible world with a different outcome~\citep{lewis1973counterfactuals}.
Translating this idea to our setting, a counterfactual explanation $\xCE$ for an individual  $\xF$ is given by a solution to the following optimisation problem:
\begin{equation}
    \label{eq:optimisation_problem_counterfactual_explanation}
    \textstyle
    \xCE \in \argmin_{\x \in \mathcal{X}} \quad \dist(\x, \xF) \quad \subto \quad h(\x)\geq0.5,
\end{equation}
where $\dist(\cdot,\cdot)$ is a similarity metric on $\mathcal{X}$, and  
additional constraints may be added to reflect
plausibility, feasibility, or diversity of the obtained counterfactual explanations~\citep{joshi2019towards, karimi2020model, mahajan2019preserving, mothilal2020explaining, poyiadzi2019face, sharma2020certifai}.
%

Importantly, while $\xCE$ signifies the most similar individual to $\xF$ that would receive the loan, it does not inform $\xF$ on the actions they should perform to become $\xCE$.
To address this limitation, 
the recently proposed framework of \emph{algorithmic recourse}  focuses instead on the actions an individual can perform to achieve a more favourable outcome \citep{ustun2019actionable}.
The emphasis is thus shifted from minimising a distance as in \eqref{eq:optimisation_problem_counterfactual_explanation} to optimising a personalised cost function $\costF(\cdot)$ over a set of actions $\AF$ which individual $\xF$ can perform. 
However, most prior work on both counterfactual explanations and algorithmic recourse
considers features as independently manipulable inputs to the classifier $h$ (see \fig\ref{fig:independent_features}), and therefore, ignores the potentially rich causal structure over $\X$ 
(see \fig \ref{fig:causal_graph}).
%
A number of authors have argued for the need to consider causal relations between variables when generative counterfactual explanations \cite{wachter2017counterfactual, ustun2019actionable, karimi2020model, mothilal2020explaining, mahajan2019preserving}, however, the resulting counterfactuals fail to imply feasible and optimal recourse actions~\cite{karimi2020algorithmic}.

In the most relevant work to the current~\cite{karimi2020algorithmic}, the authors approach the algorithmic recourse problem from a causal perspective within the \scm\ framework and propose to view recourse actions $a\in\AF$ as interventions of the form $do(\X_\I=\th)$.
For the class of invertible \scm s, such as  additive noise models (\anm)~\cite[]{hoyer2009nonlinear}, where the structural equations $\mathbf{S}$ are of the form 
\begin{equation}
    \textstyle
    \label{eq:ANM}
    \mathbf{S}=\{X_r:=f_r(\X_{\pa(r)})+U_r\}_{r=1}^d \quad \implies \quad  u_r^\texttt{F}=x_r^\texttt{F}-f_r(\x_{\pa(r)}^\texttt{F}), \quad r\in[d],
\end{equation}
they propose to use the three steps of structural counterfactuals in~\cite{pearl2009causality} to  assign a single counterfactual $\xSCF(a):=\x(a)|\xF$ to each action $a=do(\X_\I=\th)\in\AF$, and solve 
the optimisation problem,
\begin{equation}
\label{eq:optimisation_problem_recourse}
\textstyle
\aF= \argmin_{a=do(\X_\I=\th)\in\AF}\quad \costF(a) \quad \subto \quad h(\xSCF(a))\geq0.5.
\end{equation}

\vspace{\negspace}
\section{Negative result: no recourse guarantees for unknown structural equations}
\label{sec:motivating_example}
\vspace{\negspace}
In practice, the structural counterfactual $\xSCF(a)$ can only be computed using an approximate (and likely imperfect) \scm\ $\MH=(\SH, P_\U)$, which is estimated from data assuming a particular form of the structural equation as in \eqref{eq:ANM}. 
%
%
However, assumptions on the form of $\Sstar$ are generally untestable---not even with a randomised experiment---since there exist multiple \scm s which imply the same observational and interventional distributions, but entail different structural counterfactuals.

\begin{example}[adapted from 6.19 in \citep{peters2017elements}]
\label{example:equivalent_SCMs}
Consider the following two \scm s $\M_A$ and $\M_B$ which arise from the general form in Figure~\ref{fig:SCM} by choosing $U_1, U_2 \sim \text{Bernoulli}(0.5)$ and \hbox{$U_3\sim \text{Uniform}(\{0, \ldots, K\})$} independently in both $\M_A$ and $\M_B$, with  structural equations
\begin{align*}
        X_1 &:= U_1, &\text{in} \quad &\{\M_A, \M_B\},\\
        X_2 &:= X_1(1-U_2), &\text{in} \quad &\{\M_A, \M_B\},\\
        X_3 &:= \mathbb{I}_{X_1\neq X_2}(\mathbb{I}_{U_3>0}X_1+\mathbb{I}_{U_3=0}X_2) + \mathbb{I}_{X_1=X_2}U_3, &\text{in} \quad &\M_A,\\
        X_3 &:= \mathbb{I}_{X_1\neq X_2}(\mathbb{I}_{U_3>0}X_1+\mathbb{I}_{U_3=0}X_2) + \mathbb{I}_{X_1=X_2}(K-U_3), &\text{in} \quad &\M_B.
\end{align*}
Then $\M_A$ and $\M_B$ both imply exactly the same observational and interventional distributions, and  thus are indistinguishable from empirical data.
However, 
having observed $\xF=(1, 0, 0)$, they predict different counterfactuals had $X_1$ been $0$, i.e.,  $\xSCF(X_1=0)=(0,0,0)$ and $(0,0,K)$, respectively.\footnote{This follows from abduction on $\xF=(1, 0, 0)$ which for both $\M_A$ and $\M_B$ implies $U_3=0$.}
\end{example}

Confirming or refuting an assumed form of $\Sstar$ would thus require counterfactual data which is, by definition, never available.
Thus, example~\ref{example:equivalent_SCMs} proves the following proposition by contradiction.

\begin{proposition}[Lack of recourse guarantees]
\label{prop:no_guarantees}
Unless the set of descendants of intervened-upon variables is empty,
algorithmic recourse can, in general,  be guaranteed only if the true structural equations are known, irrespective of the amount 
and type
of available data.
\end{proposition}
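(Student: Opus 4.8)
The plan is to argue by contradiction, leveraging Example~\ref{example:equivalent_SCMs}. Suppose, toward a contradiction, that there were a recourse procedure which, for \emph{every} classifier $h$, factual individual $\xF$ with $h(\xF)<0.5$, and feasible action set $\AF$ containing at least one intervention $do(\X_\I=\th)$ whose intervened-upon variables have a nonempty set of descendants, outputs an action $\aF\in\AF$ that is \emph{guaranteed} to achieve recourse, i.e.\ $h(\xSCF(\aF))\geq0.5$ in the true (but unknown) \scm\ $\Mstar=(\Sstar,P_\U)$, whenever recourse is attainable. The procedure is allowed to use the causal graph $\G$ together with arbitrarily much data of arbitrary (observational and/or interventional) type; counterfactual data is excluded from consideration, as it is by definition never available.

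First I would record the key indistinguishability fact: by Example~\ref{example:equivalent_SCMs}, the two \scm s $\M_A$ and $\M_B$ share the graph $\G$ of Figure~\ref{fig:causal_graph} and induce \emph{identical} observational and interventional distributions; hence no dataset---however large, and whether passively observed or obtained from randomised experiments---has a different law under $\Mstar=\M_A$ than under $\Mstar=\M_B$. Consequently the procedure must return the \emph{same} action $\aF$ in both cases.

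Next I would exhibit a single instance on which this shared answer must be wrong. Take $\xF=(1,0,0)$ and restrict the feasible set to $\AF=\{do(X_1=0),\,do(X_2=1)\}$; in both actions the intervened variable has $X_3$ as a descendant, so the hypothesis applies. Abduction on $\xF=(1,0,0)$ yields $U_1=1$, $U_2=1$, $U_3=0$ in both models, so from Example~\ref{example:equivalent_SCMs} we get $\xSCF(do(X_1=0))=(0,0,0)$ in $\M_A$ but $(0,0,K)$ in $\M_B$, and $\xSCF(do(X_2=1))=(1,1,0)$ in $\M_A$ but $(1,1,K)$ in $\M_B$. Since for $K\geq1$ the points $(1,0,0)$, $(0,0,0)$, $(0,0,K)$, $(1,1,0)$, $(1,1,K)$ are pairwise distinct, one can pick a classifier $h$ with $h(\xF)<0.5$, $h(0,0,0)\geq0.5>h(1,1,0)$, and $h(1,1,K)\geq0.5>h(0,0,K)$. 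Then recourse \emph{is} attainable in each world---the unique valid action in $\AF$ is $do(X_1=0)$ under $\M_A$ and $do(X_2=1)$ under $\M_B$---yet these valid sets are disjoint, so the procedure's single output $\aF$ fails in at least one of the two worlds, contradicting the claimed guarantee.

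Finally, I would justify the scope of the statement by noting why the descendant condition cannot be dropped: if the intervened-upon variables $\X_\I$ have no descendants, then every other structural equation and all of its arguments are untouched by $do(\X_\I=\th)$, so abduction followed by the action step returns $X_r=x_r^\texttt{F}$ for $r\notin\I$ and $X_\I=\th$; thus $\xSCF(do(\X_\I=\th))$ is pinned down independently of the form of $\Sstar$, and validity can be certified from the graph alone. I expect the main difficulty to be bookkeeping rather than conceptual depth: one must (i) restrict $\AF$ so as to exclude the ``harmless'' interventions (e.g.\ $do(X_3=\cdot)$, or joint interventions that also fix $X_3$) which \emph{do} admit unambiguous counterfactuals in Example~\ref{example:equivalent_SCMs}; (ii) ensure recourse is genuinely achievable in both $\M_A$ and $\M_B$, so that the failure is substantive rather than vacuous; and (iii) invoke the full strength of Example~\ref{example:equivalent_SCMs}, namely equality of the \emph{interventional} (not merely the observational) distributions, to cover ``irrespective of the amount and type of available data''.
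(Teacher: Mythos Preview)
Your proposal is correct and follows essentially the same approach as the paper: both argue by contradiction via Example~\ref{example:equivalent_SCMs}, using that $\M_A$ and $\M_B$ are observationally and interventionally indistinguishable yet yield different counterfactuals. The paper's own proof is a one-line pointer to the example, whereas you spell out the contradiction in full (explicit $\AF$, explicit classifier, disjoint valid-action sets) and additionally justify why the empty-descendant case is excluded---all of which is correct and sharper than what the paper provides.
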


\begin{remark}
The converse of Proposition \ref{prop:no_guarantees} does not hold. E.g., given $\xF=(1,0,1)$ in Example \ref{example:equivalent_SCMs}, abduction in either model yields $U_3>0$, so the counterfactual of $X_3$ cannot be predicted exactly.
\end{remark}


Building on the framework in \cite{karimi2020algorithmic}, we next present two novel approaches for causal algorithmic recourse 
under unknown structural equations.  
The first approach in \cref{sec:recourse_via_probabilistic_counterfactuals} aims to estimate the counterfactual distribution under the assumption of \anm s \eqref{eq:ANM} with Gaussian noise for the structural equations. 
The second approach in \cref{sec:subpopulation_based_recourse_via_CATE} makes no assumptions about the structural equations, and  instead of approximating the structural equations, it considers {the effect of interventions on a sub-population similar to} $\xF$. 
We recall that the causal graph is assumed to be known throughout.

\vspace{\negspace}
\section{Individualised algorithmic recourse via (probabilistic) counterfactuals}
\label{sec:recourse_via_probabilistic_counterfactuals}
\vspace{\negspace}
Since the true \scm\ $\Mstar$ is unknown, one approach to solving \eqref{eq:optimisation_problem_recourse} is to learn an approximate \scm\ $\MH$ within a given model class from training data $\{\x^i\}_{i=1}^n$.
For example, for an \anm\ \eqref{eq:ANM} with zero-mean noise, the functions $f_r$ can be learned via linear or kernel (ridge) regression of $X_r$ given $\X_{\pa(r)}$ as input. 
We refer to these approaches as $\MH_{\lin}$ and $\MH_{\KR}$, respectively. 
$\MH$ can then be used in place of $\Mstar$ to infer the noise values as in \eqref{eq:ANM}, and subsequently to predict a \textit{single-point counterfactual} $\xSCF(a)$ to be used in  \eqref{eq:optimisation_problem_recourse}.
%
However, the learned causal model $\MH$ may be imperfect, and thus lead to wrong counterfactuals  due to, e.g., the finite sample of the observed data, or more importantly, due to model misspecification (i.e., assuming a wrong parametric form for the structural equations). 
%
%
%

To solve such limitation, we adopt a Bayesian approach 
to account for the uncertainty in the estimation of the structural equations. 
Specifically, we assume  additive Gaussian noise and rely on  probabilistic regression using a Gaussian process (\gp) prior over the functions $f_r$ \citep{williams2006gaussian}.

\begin{definition}[\gpscm]
\label{def:GPSCM}
A Gaussian process \scm\ (\gpscm) over $\X$ refers to the model
\begin{equation}
    \label{eq:GP_SCM}
    X_r := f_r(\X_{\pa(r)})+ U_r, \quad\quad f_r\sim \mathcal{GP}(0, k_{r}), \quad\quad U_r\sim\mathcal{N}(0, \sigma^2_r), \quad\quad  r\in[d],
\end{equation}
with covariance functions $k_{r}:\mathcal{X}_{\pa(r)}\times\mathcal{X}_{\pa(r)}\rightarrow \R$, e.g., RBF kernels for continuous $X_{\pa(r)}$.
\end{definition}

While \gp s have previously been studied in a causal context for structure learning~\cite{friedman2000gaussian, von2019optimal}, estimating treatment effects~\citep{alaa2017bayesian, schulam2017reliable}, or learning \scm s with latent variables and measurement error~\citep{silva2010gaussian},
our goal here is to account for the uncertainty over $f_r$ in the computation of the posterior over  $U_r$, and thus to obtain a \emph{counterfactual distribution}, as summarised in the following  propositions.

\vspace{.5em}
\begin{restatable}[\gpscm\ noise posterior]{proposition}{GPSCMNP}
\label{prop:noise_posterior}
Let $\{\x^i\}_{i=1}^n$ be an observational sample from
\eqref{eq:GP_SCM}.
For each $r\in[d]$ with non empty parent set $|\pa(r)|>0$, the posterior distribution of the noise vector $\u_r=(u_r^1, ...,u_r^n)$,  conditioned on $\x_r=(x_r^1, ..., x_r^n)$ and $\X_{\pa(r)}=(\x_{\pa(r)}^1,...,\x_{\pa(r)}^n)$, is given by 
\begin{equation}
\label{eq:noise_posterior}
 \u_r|\X_{\pa(r)}, \x_r \sim \N\left(\ss_r (\K+\ss_r \Id)^{-1}\x_r, \ss_r\left(\Id-\ss_r(\K+\ss_r \Id)^{-1}\right)\right), 
\end{equation}
where $\K:=\big(k_r\big(\mathbf{x}_{\pa(r)}^i, \mathbf{x}_{\pa(r)}^j\big)\big)_{ij}$ denotes the Gram matrix.
\end{restatable}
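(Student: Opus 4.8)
The plan is to recognize \eqref{eq:GP_SCM} restricted to coordinate $r$ as a standard Gaussian process regression model, where $\x_r$ plays the role of the observed targets, $\X_{\pa(r)}$ the inputs, $f_r$ the latent function with prior $\GP(0,k_r)$, and $\u_r$ the i.i.d.\ Gaussian ``observation noise'' with variance $\ss_r$. Writing $\f_r := (f_r(\x^1_{\pa(r)}), \ldots, f_r(\x^n_{\pa(r)}))$, the prior is $\f_r \sim \N(\0, \K)$ and $\x_r = \f_r + \u_r$ with $\u_r \sim \N(\0, \ss_r \Id)$ independent of $\f_r$. The quantity we want is the posterior over $\u_r$ rather than over $\f_r$, but since $\u_r = \x_r - \f_r$ and $\x_r$ is fixed under conditioning, the posterior of $\u_r$ is just an affine ($-1$ shift-and-flip) image of the posterior of $\f_r$, so it suffices to compute $P(\f_r \mid \X_{\pa(r)}, \x_r)$ and then transform.

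The key steps, in order: (i) Note $(\f_r, \x_r)$ are jointly Gaussian with mean zero and covariance blocks $\mathrm{Cov}(\f_r) = \K$, $\mathrm{Cov}(\x_r) = \K + \ss_r \Id$, $\mathrm{Cov}(\f_r, \x_r) = \K$. (ii) Apply the standard Gaussian conditioning formula to get $\f_r \mid \x_r \sim \N\!\big(\K(\K+\ss_r\Id)^{-1}\x_r,\ \K - \K(\K+\ss_r\Id)^{-1}\K\big)$. (iii) Since $\u_r = \x_r - \f_r$ with $\x_r$ constant, the posterior mean of $\u_r$ is $\x_r - \K(\K+\ss_r\Id)^{-1}\x_r = \big(\Id - \K(\K+\ss_r\Id)^{-1}\big)\x_r = \ss_r(\K+\ss_r\Id)^{-1}\x_r$, using $\Id - \K(\K+\ss_r\Id)^{-1} = (\K+\ss_r\Id - \K)(\K+\ss_r\Id)^{-1} = \ss_r(\K+\ss_r\Id)^{-1}$; and the posterior covariance of $\u_r$ equals that of $\f_r$ (flipping sign leaves covariance unchanged), namely $\K - \K(\K+\ss_r\Id)^{-1}\K$. (iv) Simplify this covariance to the claimed form $\ss_r\big(\Id - \ss_r(\K+\ss_r\Id)^{-1}\big)$: write $\K = (\K+\ss_r\Id) - \ss_r\Id$, so $\K - \K(\K+\ss_r\Id)^{-1}\K = \K\big(\Id - (\K+\ss_r\Id)^{-1}\K\big) = \K(\K+\ss_r\Id)^{-1}\big((\K+\ss_r\Id) - \K\big) = \ss_r\K(\K+\ss_r\Id)^{-1}$, and then substituting $\K = (\K+\ss_r\Id) - \ss_r\Id$ once more gives $\ss_r\big(\Id - \ss_r(\K+\ss_r\Id)^{-1}\big)$, matching \eqref{eq:noise_posterior}.

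There is no serious obstacle here: the result is essentially a bookkeeping exercise in Gaussian conditioning plus the Woodbury-style identity rearrangements of step (iv). The only points requiring mild care are making sure the matrix $\K + \ss_r\Id$ is invertible (immediate since $\K \succeq 0$ and $\ss_r > 0$, so $\K + \ss_r \Id \succeq \ss_r \Id \succ 0$), and justifying that the prior $\GP(0,k_r)$ on $f_r$ together with the finite-dimensional marginalisation property of Gaussian processes indeed yields the joint Gaussian in step (i) — this is standard \citep{williams2006gaussian}. The condition $|\pa(r)| > 0$ is only needed so that $\K$ is a genuine kernel Gram matrix over nontrivial inputs (for root nodes there are no inputs and the $\gpscm$ reduces to $X_r = f_r + U_r$ with $f_r$ a scalar Gaussian, a degenerate case handled separately). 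I would present steps (i)–(iv) compactly, since each is a one-line computation.
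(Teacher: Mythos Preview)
Your proof is correct, but you take a slightly longer detour than the paper. You form the joint of $(\f_r,\x_r)$, condition to get the standard GP posterior over $\f_r$, and then push this through $\u_r=\x_r-\f_r$, which forces the Woodbury-style rearrangements in your steps (iii) and (iv). The paper instead forms the joint of $(\u_r,\x_r)$ directly: since $\u_r\perp\f_r$, the cross-covariance is $\mathrm{Cov}(\u_r,\x_r)=\mathrm{Cov}(\u_r,\u_r)=\ss_r\Id$, so Gaussian conditioning immediately gives mean $\ss_r\Id(\K+\ss_r\Id)^{-1}\x_r$ and covariance $\ss_r\Id-\ss_r\Id(\K+\ss_r\Id)^{-1}\ss_r\Id$, i.e., exactly \eqref{eq:noise_posterior} with no matrix algebra needed. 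Your route has the advantage of recycling the familiar GP-regression posterior formula; the paper's route is shorter because choosing $\u_r$ rather than $\f_r$ as the first block makes the cross-covariance a scalar multiple of the identity, so the answer falls out in the form stated.
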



Next, in order to compute counterfactual distributions, 
we rely on ancestral sampling (according to the causal graph) of the descendants of the intervention targets $\X_\I$ using the noise posterior of~\eqref{eq:noise_posterior}. 
The  counterfactual distribution of each descendant $X_r$ is  given by the following proposition.

\vspace{.5em}
\begin{restatable}[\gpscm\ counterfactual distribution]{proposition}{GPSCMCF}
\label{prop:counterfactual_distribution}
Let $\{\x^i\}_{i=1}^n$ be an observational sample from~\eqref{eq:GP_SCM}.
Then, for $r\in[d]$ with $|\pa(r)|>0$, the counterfactual distribution over $X_r$ had $\X_{\pa(r)}$ been $\tilde{\x}_{\pa(r)}$ (instead of $\xF_{\pa(r)}$) for individual $\xF\in \{\x^i\}_{i=1}^n$ is given by 
\begin{equation}
\label{eq:GP_SCM_counterfactual_dis}
X_r(\X_{\pa(r)}=\tilde{\x}_{\pa(r)})| \xF, \{\x^i\}_{i=1}^n \sim \N\big(\muF_r+\tilde{\k}^T(\K+\ss_r\Id)^{-1}\x_r,\,
\ssF_r + \tilde{k} - \tilde{\k}^T (\K+\ss_r\Id)^{-1} \tilde{\k} \big),
\end{equation}
where
$\tilde{k}:=k_r(\tilde{\x}_{\pa(r)}, \tilde{\x}_{\pa(r)})$, $\tilde{\k}:=\big(k_r(\tilde{\x}_{\pa(r)}, \x_{\pa(r)}^1), \ldots, k_r(\tilde{\x}_{\pa(r)}, \x_{\pa(r)}^n)\big)$, $\x_r$ and $\K$ as defined in Proposition \ref{prop:noise_posterior}, and $\muF_r$ and $\ssF_r$ are the posterior mean and variance of $u^\texttt{F}_r$ given by  \eqref{eq:noise_posterior}.
\end{restatable}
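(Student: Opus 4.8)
The plan is to follow Pearl's three-step recipe for structural counterfactuals, adapted to the Bayesian \gpscm\ setting, and to carry the Gaussianity through each step so that everything reduces to standard Gaussian conditioning. First I would set up abduction: given the observational sample $\{\x^i\}_{i=1}^n$ and the individual $\xF$ (which is one of the $\x^i$), Proposition \ref{prop:noise_posterior} already hands me the joint posterior of the noise vector $\u_r$; in particular the marginal posterior of the entry $u^\texttt{F}_r$ corresponding to $\xF$ is Gaussian with mean $\muF_r$ and variance $\ssF_r$ read off from \eqref{eq:noise_posterior}. For the action step, intervening to set $\X_{\pa(r)}=\tilde{\x}_{\pa(r)}$ replaces the structural assignment's parent argument while leaving the noise untouched, so the counterfactual value is $X_r(\tilde{\x}_{\pa(r)})\given\xF = f_r(\tilde{\x}_{\pa(r)}) + u^\texttt{F}_r$. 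The prediction step then requires the posterior distribution of the sum $f_r(\tilde{\x}_{\pa(r)}) + u^\texttt{F}_r$ given the data.

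The key observation that makes this tractable is that, under the \gp\ prior, the vector $\big(f_r(\x^1_{\pa(r)}),\ldots,f_r(\x^n_{\pa(r)}), f_r(\tilde{\x}_{\pa(r)})\big)$ together with the noises $\u_r$ is jointly Gaussian a priori, and conditioning on the observed $\x_r = (x^1_r,\ldots,x^n_r)$ — which is itself a linear-Gaussian function of these latents, $x^i_r = f_r(\x^i_{\pa(r)}) + u^i_r$ — keeps everything jointly Gaussian a posteriori. So I would write down the joint prior covariance of $f_r(\tilde{\x}_{\pa(r)})$, the training function values $\f_r$, and $u^\texttt{F}_r$, noting the prior cross-covariances: $\mathrm{Cov}(f_r(\tilde{\x}_{\pa(r)}), f_r(\x^i_{\pa(r)})) = \tilde{k}$'s $i$-th entry, $\mathrm{Var}(f_r(\tilde{\x}_{\pa(r)})) = \tilde{k}$, and $u^\texttt{F}_r$ independent of all $f$-values a priori. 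Then applying the standard Gaussian-conditioning formula with observations $\x_r$ (noise level $\ss_r$) yields the usual \gp-regression predictive mean $\tilde{\k}^T(\K+\ss_r\Id)^{-1}\x_r$ and variance $\tilde{k} - \tilde{\k}^T(\K+\ss_r\Id)^{-1}\tilde{\k}$ for $f_r(\tilde{\x}_{\pa(r)})$.

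The last step is to add the (posterior) noise term $u^\texttt{F}_r$. The subtlety — and what I expect to be the main obstacle to state cleanly — is that $f_r(\tilde{\x}_{\pa(r)})$ and $u^\texttt{F}_r$ are \emph{not} independent in the posterior: conditioning on $\x_r$ couples $u^\texttt{F}_r$ to the training function values and hence to $f_r(\tilde{\x}_{\pa(r)})$. One must therefore either (i) track the full posterior covariance between $f_r(\tilde{\x}_{\pa(r)})$ and $u^\texttt{F}_r$ and show the cross-terms combine correctly, or (ii) — cleaner — condition jointly: treat $\big(f_r(\tilde{\x}_{\pa(r)}) + u^\texttt{F}_r\big)$ as a single linear functional of the a-priori-Gaussian vector $(\f_r, u^\texttt{F}_r)$ and condition once on $\x_r = \f_r + \u_r$. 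Under approach (ii) the posterior mean is simply the sum $\muF_r + \tilde{\k}^T(\K+\ss_r\Id)^{-1}\x_r$ (linearity of conditional expectation), and the posterior variance is $\mathrm{Var}(f_r(\tilde{\x}_{\pa(r)})\given\x_r) + \mathrm{Var}(u^\texttt{F}_r\given\x_r) + 2\,\mathrm{Cov}(f_r(\tilde{\x}_{\pa(r)}), u^\texttt{F}_r\given\x_r)$; a short computation shows the covariance term vanishes precisely because $u^\texttt{F}_r$ enters $\x_r$ only through the single coordinate $x^\texttt{F}_r$ while $f_r(\tilde{\x}_{\pa(r)})$'s posterior dependence on that coordinate is already absorbed, leaving the stated variance $\ssF_r + \tilde{k} - \tilde{\k}^T(\K+\ss_r\Id)^{-1}\tilde{\k}$. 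I would verify this cancellation explicitly using the block structure of the joint covariance; that algebraic check is the one place where care is genuinely needed, and I would present it as the crux of the proof rather than relegating it to "routine."
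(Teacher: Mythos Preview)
Your overall plan---abduction via Proposition~\ref{prop:noise_posterior}, action, then prediction by combining the \gp\ predictive posterior for $f_r(\tilde{\x}_{\pa(r)})$ with the noise posterior for $u^\texttt{F}_r$---is exactly the route the paper takes. The paper's proof simply writes down the two marginal posteriors and adds them, invoking ``the sum of two Gaussians is again Gaussian with mean and variance equal to the sums of means and variances,'' without ever discussing the posterior dependence you flag.

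You are right to be suspicious of that step, but your proposed resolution is where the gap lies: the posterior cross-covariance does \emph{not} vanish. Writing $\tilde f:=f_r(\tilde{\x}_{\pa(r)})$ and using the standard Gaussian-conditioning formula with $\mathrm{Cov}(\tilde f,\x_r)=\tilde{\k}^T$, $\mathrm{Cov}(\x_r,u^\texttt{F}_r)=\ss_r\,\mathbf e_{\texttt{F}}$, and $\mathrm{Cov}(\tilde f,u^\texttt{F}_r)=0$ a priori, one obtains
\[
\mathrm{Cov}\big(\tilde f,\,u^\texttt{F}_r\,\big|\,\x_r,\X_{\pa(r)}\big)
\;=\;-\,\ss_r\,\big[(\K+\ss_r\Id)^{-1}\tilde{\k}\big]_{\texttt{F}},
\]
which is generically nonzero. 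So the ``short computation'' you promise as the crux would fail: carrying it out honestly yields an extra $-2\ss_r\big[(\K+\ss_r\Id)^{-1}\tilde{\k}\big]_{\texttt{F}}$ term in the variance, not the stated $\ssF_r+\tilde k-\tilde{\k}^T(\K+\ss_r\Id)^{-1}\tilde{\k}$.

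In other words, the stated variance is precisely what one gets by \emph{treating} the abduction posterior $u^\texttt{F}_r\mid\x_r$ and the \gp\ predictive $f_r(\tilde{\x}_{\pa(r)})\mid\x_r$ as independent and summing---which is exactly what the paper's proof does, without comment. If your goal is to reproduce the paper's argument for the proposition as stated, drop the cancellation claim and simply add the two marginal Gaussians as the paper does; if your goal is a fully rigorous joint-posterior derivation, be prepared for the variance you compute to differ from~\eqref{eq:GP_SCM_counterfactual_dis} by the cross-term above.
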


All proofs can be found in Appendix \ref{app:proofs}.
We can now generalise the recourse problem \eqref{eq:optimisation_problem_recourse} to our probabilistic setting by replacing the single-point counterfactual $\xSCF(a)$ with the counterfactual random variable $\XSCF(a):=\X(a)|\xF$. 
%
As a consequence, it no longer makes sense to consider a hard  constraint of the form $h(\xSCF(a))>0.5$, i.e., that the prediction needs to change. %
Instead, we can reason about the expected classifier output under the counterfactual distribution, leading to the following \textit{probabilistic version of the individualised recourse optimisation problem}: 
\begin{equation}
\label{eq:optimisation_problem_probabilistic_recourse}
\textstyle
\min_{a=do(\X_\I=\th)\in\AF}\quad \costF(a) \quad \subto \quad  \E_{\XSCF(a)}\left[h\left(\XSCF(a)\right)\right] \geq \texttt{thresh}(a).
\end{equation}
Note that the threshold $\texttt{thresh}(a)$ is allowed to depend on $a$. For example, an intuitive choice is
\begin{equation}
\label{eq:LCB}
\textstyle
\texttt{thresh}(a) = 0.5 +\gamma_\LCB \sqrt{\text{Var}_{\XSCF(a)}\left[h\left(\XSCF(a)\right)\right]}
\end{equation}
which has the  interpretation of the lower-confidence bound  crossing the decision boundary of $0.5$.
Note that larger values of the hyperparameter $\gamma_\LCB$ lead to a more conservative approach to recourse, while for $\gamma_\LCB=0$ merely crossing the decision boundary with $\geq 50\%$ chance suffices. 


\vspace{\negspace}
\section{Subpopulation-based algorithmic recourse via interventions and \cate s}
\label{sec:subpopulation_based_recourse_via_CATE}
\vspace{\negspace}

\newcommand{\nodesize}{1.5em}
\newcommand{\ys}{2em}
\newcommand{\xs}{-1.75em}
\newcommand{\ang}{35}
\begin{figure}
    \centering
    \begin{minipage}[b]{0.35\textwidth}
        \centering
        \includegraphics[width=\textwidth]{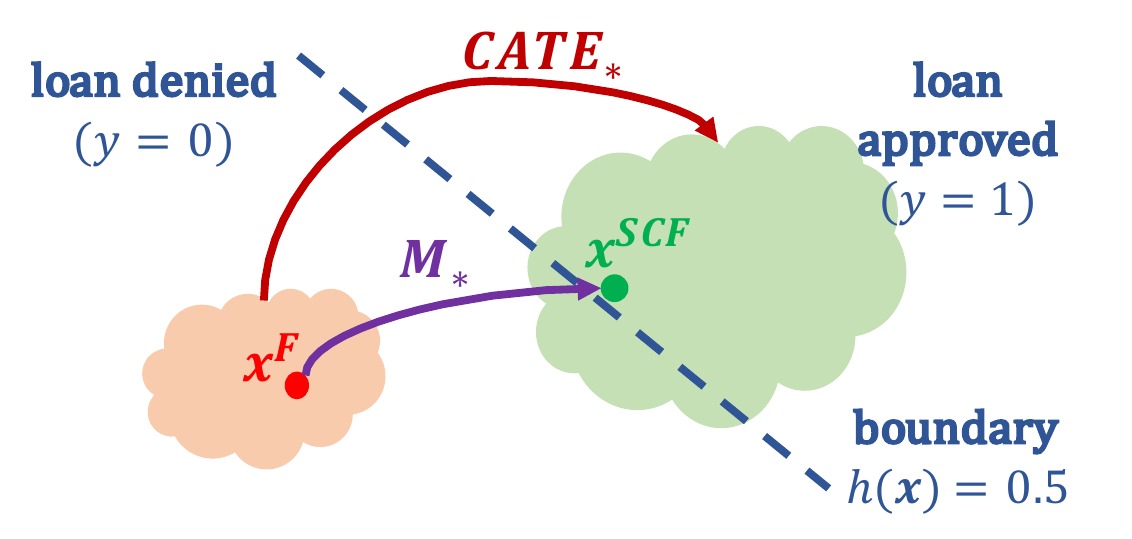}
        \vspace{-1.5em}
        \subcaption{}
        \vspace{-0.75em}
        \label{fig:illustration}
        \begin{tikzpicture}
            \centering
            \node (A) [latent, minimum size=\nodesize] {$A$};
            \node (G) [latent, below=of A, yshift=\ys, minimum size=\nodesize] {$G$};
            \node (E) [latent, right=of A, xshift=\xs, minimum size=\nodesize] {$E$};
            \node (L) [latent, right=of G, xshift=\xs, minimum size=\nodesize] {$L$};
            \node (D) [latent, right=of L, xshift=\xs, minimum size=\nodesize] {$D$};
            \node (I) [latent, right=of E, xshift=\xs, minimum size=\nodesize] {$I$};
            \node (S) [latent, right=of I, xshift=\xs, minimum size=\nodesize] {$S$};
            \edge {A, G} {E, L};
            \edge{A}{D};
            \path[->] (A) edge[bend right=-\ang] node[yshift=1em] {} (I);
            \path[->] (G) edge[bend right=\ang] node[yshift=1em] {} (D); \edge{L}{D};
            \edge{G,E}{I};
            \edge{I}{S};
        \end{tikzpicture}
        \subcaption{}
        \label{fig:german_credit}
    \end{minipage}%
    \begin{minipage}[b]{0.65\textwidth}
    \includegraphics[width=\textwidth]{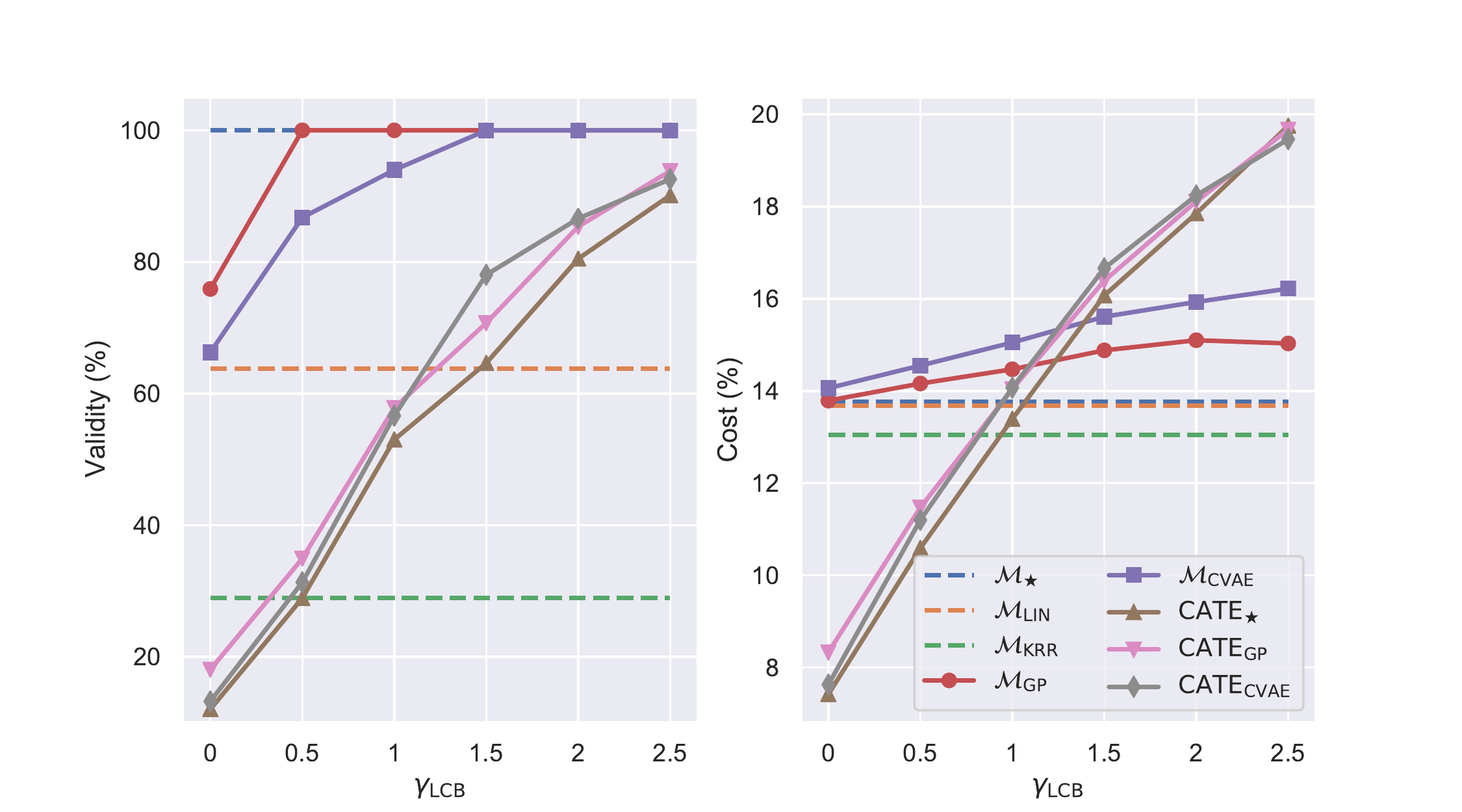}
    \vspace{-1em}
    \subcaption{}
    \label{fig:gamma_LCB}
    \end{minipage}
    \caption{(a) Illustration of point- and subpopulation-based recourse approaches. (b) Assumed causal graph for the semi-synthetic loan approval dataset. (c) Trade-off between validity and cost which can be controlled via $\gamma_\LCB$ for the probabilistic recourse methods.}
    \label{fig:german_credit_graph_and_LCB_plots}
\end{figure}

 The \gpscm{} approach in \cref{sec:recourse_via_probabilistic_counterfactuals} allows us to average over an infinite number of
 (non-)linear
 structural equations, under the assumption of additive Gaussian noise. However, this assumption may still not hold under the true \scm, leading to  sub-optimal or inefficient solutions to the recourse problem. 
Next, we remove any assumptions about the structural equations, and propose a second approach that does not aim to approximate an individualised counterfactual distribution, but instead considers the effect of interventions on a subpopulation defined by certain shared characteristics with the given (factual) individual $\xF$.
The key idea behind this approach resembles the notion of conditional average treatment effects (\cate) \citep{abrevaya2015estimating} (illustrated in \fig\ref{fig:illustration}) and is based on the fact that any intervention  $do(\X_\I=\th)$ only influences the descendants $\d(\I)$ of the intervened-upon variables, while the non-descendants $\nd(\I)$ remain unaffected. 
Thus, when evaluating an intervention, we can condition on $\X_{\nd(\I)}=\xF_{\nd(\I)}$, thus selecting a subpopulation of individuals similar to the factual subject. 


Specifically, we propose to solve the following\textit{ subpopulation-based recourse optimisation problem}
\begin{equation}
\label{eq:optimisation_problem_CATE}
\min_{a
\in\AF}\quad  \costF(a) \quad \subto \quad  \E_{\X_{\d(\I)}|do(\X_\I = \th), \xF_{\nd(\I)}}\big[h\big(\xF_{\nd(\I)}, \th, \X_{\d(\I)}\big)\big]  \geq \texttt{thresh}(a),
\end{equation}
where, in contrast to \eqref{eq:optimisation_problem_probabilistic_recourse}, the expectation is taken over the corresponding interventional distribution.

In general, this interventional distribution does not match the conditional distribution, i.e.,  $P_{\X_{\d(\I)}\given do(\X_\I=\th), \xF_{\nd(\I)}}\neq P_{\X_{\d(\I)}\given \X_\I=\th, \xF_{\nd(\I)}}$,
because some spurious correlations in the observational distribution do not transfer to the interventional setting.
For example, in \fig \ref{fig:causal_graph} we have that $P_{X_2\given do(X_1=x_1,X_3=x_3)}=P_{X_2\given X_1=x_1}\neq P_{X_2\given X_1=x_1,X_3=x_3}$. 
Fortunately, the interventional distribution can still be identified from the observational one, as stated
in the following proposition.


\begin{restatable}[]{proposition}{IDIVDIST}
\label{prop:identifiability_of_interventional_dist}
Subject to causal sufficiency, $P_{\X_{\d(\I)}\given do(\X_\I=\th), \xF_{\nd(\I)}}$ is observationally identifiable:
\begin{equation}
\label{eq:identifiability_of_interventional_dist}
\textstyle
p\big(\X_{\d(\I)}\given do(\X_\I=\th), \xF_{\nd(\I)}\big)=  \left. \prod_{r\in\d(\I)}  p\left(X_{r} \given \X_{pa(r)}\right) \right|_{\X_\I=\th, \X_{\nd(\I)}=\xF_{\nd(\I)}}.
\end{equation}
\end{restatable}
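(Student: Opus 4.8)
The plan is to derive the identification formula by the standard truncated-factorisation (``$g$-formula'') argument, specialised to the case where the intervention fixes $\X_\I$ and we additionally condition on the non-descendants $\X_{\nd(\I)}=\xF_{\nd(\I)}$. First I would recall that, under causal sufficiency and acyclicity, the manipulated model $\left(\mathbf{S}^{do(\X_\I=\th)},P_\U\right)$ induces an interventional distribution that factorises over the mutilated graph $\G^{do(\X_\I=\th)}$, obtained from $\G$ by deleting all incoming edges to nodes in $\I$. Concretely,
\begin{equation*}
p\big(\x_{-\I}\given do(\X_\I=\th)\big)=\prod_{r\notin\I} p\big(X_r\given\X_{\pa(r)}\big)\Big|_{\X_\I=\th},
\end{equation*}
where each factor $p(X_r\given\X_{\pa(r)})$ is the \emph{same} conditional as in the observational distribution $P_\X$ (this is the key consequence of causal sufficiency: intervening on $\X_\I$ does not change the mechanisms $f_r$ or the noise laws $P_{U_r}$ for $r\notin\I$).

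Next I would partition the non-intervened indices into descendants $\d(\I)$ and non-descendants $\nd(\I)$ of $\I$, and split the product accordingly. The crucial observation is that for $r\in\nd(\I)$ we have $\pa(r)\cap\I=\emptyset$ and in fact $\pa(r)\subseteq\nd(\I)$ (parents of a non-descendant are themselves non-descendants, by acyclicity), so the sub-product over $\nd(\I)$ involves only variables in $\X_{\nd(\I)}$ and coincides with $p(\x_{\nd(\I)}\given do(\X_\I=\th)) = p(\x_{\nd(\I)})$ — intervening downstream leaves the marginal over non-descendants unchanged. Therefore
\begin{equation*}
p\big(\x_{\d(\I)}\given do(\X_\I=\th),\xF_{\nd(\I)}\big)=\frac{p\big(\x_{\d(\I)},\xF_{\nd(\I)}\given do(\X_\I=\th)\big)}{p\big(\xF_{\nd(\I)}\given do(\X_\I=\th)\big)}=\prod_{r\in\d(\I)} p\big(X_r\given\X_{\pa(r)}\big)\Big|_{\X_\I=\th,\X_{\nd(\I)}=\xF_{\nd(\I)}},
\end{equation*}
where the $\nd(\I)$-factors cancel between numerator and denominator. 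Since every remaining factor is an \emph{observational} conditional, the right-hand side is a functional of $P_\X$ alone, which is exactly the claimed observational identifiability. I would also note in passing that the conditioning values $\xF_{\nd(\I)}$ only enter through those $\pa(r)$ with $r\in\d(\I)$ that happen to lie in $\nd(\I)$ (i.e. the ``boundary'' variables), which is consistent with the interpretation of conditioning on a similar subpopulation.

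The main obstacle — really a bookkeeping point rather than a deep difficulty — is justifying cleanly that the $\nd(\I)$-factors are unaffected by the intervention and hence cancel: this needs the facts that (i) $\nd(\I)$ is ancestrally closed, so $\pa(r)\subseteq\nd(\I)$ for $r\in\nd(\I)$, and (ii) under causal sufficiency the conditional $p(X_r\given\X_{\pa(r)})$ is mechanism-invariant under interventions on disjoint variables. A secondary subtlety is a positivity/well-definedness caveat: the conditioning event $\X_{\nd(\I)}=\xF_{\nd(\I)}$ must have positive density, which holds for the factual individual $\xF$ by assumption; one should state this so the ratio above is well-defined. Everything else is the routine truncated-factorisation manipulation.
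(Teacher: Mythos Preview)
Your proposal is correct and follows essentially the same route as the paper's proof: truncated factorisation under the intervention, splitting the product over non-intervened variables into descendants and non-descendants of $\I$, and then conditioning on $\X_{\nd(\I)}=\xF_{\nd(\I)}$ so that the non-descendant factors cancel. Your treatment is in fact slightly more careful than the paper's---explicitly noting that $\nd(\I)$ is ancestrally closed and flagging the positivity requirement---but the underlying argument is identical.
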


As evident from Proposition \ref{prop:identifiability_of_interventional_dist}, tackling the optimisation problem in \eqref{eq:optimisation_problem_CATE} in the general case (i.e., for arbitrary graphs and intervention sets $\I$) requires estimating the stable conditionals $P_{X_r\given \X_{\pa(r)}}$ (a.k.a.\ causal Markov kernels) in order to compute the interventional expectation via \eqref{eq:identifiability_of_interventional_dist}. 
For convenience (see \cref{sec:optimisation_problem} for details), here we opt for latent-variable implicit density models, but other conditional density estimation approaches may be also be used~\citep[e.g.,][]{bashtannyk2001bandwidth, bishop1994mixture, trippe2018conditional}.
Specifically, we model each conditional $p(x_r\given \x_{\pa(r)})$ with a conditional variational autoencoder (\cvae) \citep[][]{sohn2015learning}
as: 
\begin{equation}
    \label{eq:CVAE_model}
    \textstyle
    p(x_r\given \x_{\pa(r)}) \approx p_{\psi_r}(x_r\given \x_{\pa(r)}) = 
    \medint \int p_{\psi_r}(x_r\given \x_{\pa(r)}, \z_r) p(\z_r) d\z_r, \quad \quad  p(\z_r):=\N(\0, \Id).
\end{equation}
To facilitate sampling $x_r$ (and in analogy to the deterministic mechanisms $f_r$ in \scm s), we opt for deterministic decoders in the form of neural nets $\dec_r$ parametrised by $\psi_r$, i.e., $p_{\psi_r}(x_r\given \x_{\pa(r)}, \z_r) := \delta\left(x_r - \dec_r(\x_{\pa(r)}, \z_r; \psi_r)\right)$, 
and rely on variational inference \cite{wainwright2008graphical}, amortised with approximate posteriors $q_{\phi_r}(\z_r \given x_r,  \x_{\pa(r)})$ parametrised by encoders in the form of neural nets with parameters $\phi_r$. We learn both the encoder and decoder parameters by maximising the evidence lower bound (ELBO) using stochastic gradient descend \citep{bottou2008tradeoffs, kingma2014adam, kingma2013auto, rezende2014stochastic}. 
For further details, we refer to Appendix~\ref{app:cvae_training_details}.

\begin{remark}
\label{remark:pseudo_CF}
The collection of \cvae s can be interpreted as learning an approximate \scm\ of the form
\begin{equation}
\label{eq:CVAE-SCM}
    \MH_\cvae:
    \quad \quad
    \SH=\{X_r := \dec_r(\X_{\pa(r)}, \z_r;\psi_r)\}_{r=1}^d, 
    \quad \quad 
    \z_r\sim \N(\0, \Id)\quad \forall r\in[d]
\end{equation}
However, this family of \scm s may not allow to identify the true \scm\ (provided it can be expressed as above) from data without additional assumptions.
Moreover, exact posterior inference over $\z_r$ given $\xF$ is intractable, and we need to resort to approximations 
instead.
It is thus unclear whether sampling from $q_{\phi_r}(\z_r\given x^{\texttt{F}}_r, \xF_{\pa(r)})$ instead of from $p(\z_r)$ in \eqref{eq:CVAE_model} can be interpreted as a counterfactual within \eqref{eq:CVAE-SCM}.
For further discussion on such ``pseudo-counterfactuals'' we refer to Appendix~\ref{app:nonidentifiability}.
\end{remark}

\vspace{\negspace}
\section{Solving the probabilistic-recourse optimisation problems}
\label{sec:optimisation_problem}
\vspace{\negspace}
We now discuss how to solve the resulting optimisation problems in \eqref{eq:optimisation_problem_probabilistic_recourse} and \eqref{eq:optimisation_problem_CATE}.
First, note that both problems differ only on the distribution over which the expectation in the constraint is taken: in \eqref{eq:optimisation_problem_probabilistic_recourse} this is the counterfactual distribution of the descendants given in Proposition \ref{prop:counterfactual_distribution}; and in \eqref{eq:optimisation_problem_CATE} it is the interventional distribution identified in Proposition \ref{prop:identifiability_of_interventional_dist}. 
In either case, computing the expectation for an arbitrary classifier $h$ is intractable. Here, we approximate these integrals via Monte Carlo by sampling $\x_{\d(\I)}^{(m)}$ from the interventional or counterfactual distributions resulting from $a=do(\X_\I=\th)$, i.e.,
\begin{equation*}
\vspace{-0.1em}
\textstyle \E_{\X_{\d(\I)\given \th}},\big[h\big(\xF_{\nd(\I)}, \th, \X_{\d(\I)}\big)\big]\approx \frac{1}{M}
    \sum_{m=1}^M h\big(\xF_{\nd(\I)}, \th, \x_{\d(\I)}^{(m)}\big).  
\end{equation*}

\paragraph{Brute-force approach.  }
 A way to solve \eqref{eq:optimisation_problem_probabilistic_recourse} and \eqref{eq:optimisation_problem_CATE} is to (i) iterate over $a\in\AF$, with $\AF$ being a finite set of feasible actions (possibly as a result of discretising in the case of a continuous search space); (ii) approximately evaluate the constraint via Monte Carlo
 ; and (iii) select a minimum cost action amongst all evaluated candidates satisfying the constraint.
However, this may be computationally prohibitive and  yield suboptimal interventions due to discretisation. 

\paragraph{Gradient-based approach. }
 Recall that, for actions of the form $a=do(\X_\I=\th)$,
we need to optimise over both the intervention \textit{targets} $\I$ and the intervention \textit{values} $\th$.
Selecting targets is a hard combinatorial optimisation problem, 
as there are $2^{d'}$ possible choices for $d'\leq d$ actionable features, with a potentially infinite number of intervention values.
We therefore consider different choices of targets $\I$ in parallel, and propose a gradient-based approach suitable for differentiable classifiers
to efficiently find an optimal $\th$  for a given intervention set $\I$.
\footnote{For large $d$ when enumerating all $\I$ becomes computationally prohibitive, we can upper-bound the allowed number of variables to be intervened on simultaneously (e.g., $|\I|\leq 3$), or choose a greedy approach to select $\I$.}
In particular, we first rewrite the constrained optimisation problem in unconstrained form with Lagrangian~\cite{karush1939minima, kuhn1951nonlinear}:  
\begin{equation}
    \label{eq:optimisation_problem_proba_recourse_Lagrange}
\mathcal{L}(\th,\lambda):=\costF(a) + \lambda
\big(\texttt{thresh}(a) - 
\E_{\X_{\d(\I)\given \th}}\big[h\big(\xF_{\nd(\I)}, \th, \X_{\d(\I)}\big)\big]
\big). 
\end{equation}
We then solve the saddle point problem $\min_{\th} \max_\lambda \mathcal{L}(\th,\lambda)$ arising from~\eqref{eq:optimisation_problem_proba_recourse_Lagrange}
with stochastic gradient descent~\cite{bottou2008tradeoffs, kingma2014adam}.
Since both the  \gpscm\ counterfactual~\eqref{eq:GP_SCM_counterfactual_dis} and the \cvae\ interventional distributions~\eqref{eq:CVAE_model} admit a reparametrisation trick   \citep{kingma2013auto, rezende2014stochastic}, we can differentiate through the constraint:  
\begin{equation}
\label{eq:reparametrisation_trick}
    \nabla_\th \E_{\X_{\d(\I)}}\big[h\big(\xF_{\nd(\I)}, \th, \X_{\d(\I)}\big)\big]
    =
     \E_{\z\sim\N(\0,\Id)}\big[\nabla_\th h\big(\xF_{\nd(\I)}, \th, \x_{\d(\I)}(\z)\big)\big].
\end{equation}
Here, $\x_{\d(\I)}(\z)$ is obtained by iteratively computing all descendants in topological order: either substituting $\z$ together with the other parents into the decoders $\dec_r$
for the \cvae s, or by using the Gaussian reparametrisation $x_{r}(\z)=\mu+\sigma \z$ with $\mu$ and $\sigma$ given by \eqref{eq:GP_SCM_counterfactual_dis} for the \gpscm.
A similar gradient estimator for the variance which enters $\texttt{thresh}(a)$ for $\gamma_{\LCB}\neq 0$ is derived in Appendix~\ref{app:derivation_of_variance_grad}.

\vspace{\negspace}
\section{Experimental results}
\label{sec:experiments}
\vspace{\negspace}

In our experiments, we compare different approaches for \textit{causal} algorithmic recourse on synthetic and semi-synthetic data sets.
Additional results can be found in Apendix~\ref{app:additional_results}.

\paragraph{Compared methods.}
We compare the naive point-based recourse approaches $\MH_\lin$ and $\MH_\KR$ mentioned at the beginning of \cref{sec:recourse_via_probabilistic_counterfactuals} as baselines with the proposed counterfactual \gpscm\ $\MH_\gp$ and the \cvae\ approach for sub-population-based recourse ($\cate_\cvae$).
For completeness, we also consider a $\cate_\gp$ approach as a \gp\ can also be seen as modelling each conditional as a Gaussian,\footnote{Sampling from the noise prior instead of the posterior in~\eqref{eq:noise_posterior} leads to an interventional distribution in~\eqref{eq:GP_SCM_counterfactual_dis}.}
and also evaluate the ``pseudo-counterfactual'' $\MH_\cvae$ approach discussed in Remark \ref{remark:pseudo_CF}. 
Finally, we report oracle performance for individualised $\Mstar$ and sub-population-based recourse methods $\CATEstar$ by sampling counterfactuals and interventions from the true underlying \scm.
We note that a comparison with non-causal recourse approaches that assume independent features~\cite[][]{ustun2019actionable, sharma2020certifai} or consider causal relations to generate counterfactual explanations but not recourse actions~\cite{joshi2019towards, mahajan2019preserving} is neither natural nor straight-forward, because it is unclear whether descendant variables should be allowed to change, whether keeping their value constant should incur a cost, and, if so, how much, c.f.~\cite{karimi2020algorithmic}.

\paragraph{Metrics. }  We compare recourse actions recommended by the different methods in terms of  \textit{cost}, computed as the  L2-norm between the intervention $\th_\I$ and the factual value $\xF_\I$, normalised by the range of each feature $r\in\I$ observed in the training data;  and \textit{validity}, computed as the percentage of individuals for which the recommended actions result in a favourable prediction under the true (oracle) \scm.
For our probabilistic recourse methods, we also report the lower confidence bound $\LCB:=\E[h]-\gamma_\LCB\sqrt{\text{Var}[h]}$ of the selected action under the given method.

\begin{table*}[t]
  \small
   \setlength{\tabcolsep}{2.5pt}
  \caption{Experimental results for the gradient-based approach on different 3-variable \scm s. We show average performance $\pm1$ standard deviation for $N_\text{runs} = 100$, $N_\text{MC-samples}=100$, and $\gamma_\LCB=2$.
    }
  \label{table:3-vars-main}
  \begin{tabular}{l c c c c c c c c c}
    \toprule
    \multirow{2}{*}{Method} &
    \multicolumn{3}{c}{\textsc{linear} \scm} &
    \multicolumn{3}{c}{\textsc{non-linear} \anm} &
    \multicolumn{3}{c}{\textsc{non-additive} \scm} \\
    \cmidrule(r){2-4} \cmidrule(r){5-7} \cmidrule(r) {8-10}
                     & $\Validstar$ (\%)  &  $\LCB$  & Cost (\%)   & $\Validstar$ (\%)  &  $\LCB$  & Cost (\%)  & $\Validstar$ (\%)  &  $\LCB$  & Cost (\%) \\
    \midrule
    
    $\Mstar$         & 100  & -           & 10.9$\pm$7.9           & 100  & -           & 20.1$\pm$12.3         & 100  & -           & 13.2$\pm$11.0 \\
    $\MH_{\lin}$     & 100  & -           & 11.0$\pm$7.0           &  54  & -           & 20.6$\pm$11.0         &  98  & -           & 14.0$\pm$13.5 \\
    $\MH_\KR$        &  90  & -           & 10.7$\pm$6.5           &  91  & -           & 20.6$\pm$12.5         &  70  & -           & 13.2$\pm$11.6 \\
    $\MH_{\gp}$      & 100  & .55$\pm$.04 & 12.2$\pm$8.3           & 100  & .54$\pm$.03 & 21.9$\pm$12.9         &  95  & .52$\pm$.04 & 13.4$\pm$12.8 \\
    $\MH_{\cvae}$    & 100  & .55$\pm$.07 & 11.8$\pm$7.7           &  97  & .54$\pm$.05 & 22.6$\pm$12.3         &  95  & .51$\pm$.01 & 13.4$\pm$12.2 \\
    $\CATEstar$      &  90  & .56$\pm$.07 & 11.9$\pm$9.2           &  97  & .55$\pm$.05 & 26.3$\pm$21.4         & 100  & .52$\pm$.02 & 13.5$\pm$13.0 \\
    $\cate_\gp$      &  93  & .56$\pm$.05 & 12.2$\pm$8.4           &  94  & .55$\pm$.06 & 25.0$\pm$14.8         &  94  & .52$\pm$.03 & 13.2$\pm$13.1 \\
    $\cate_\cvae$    &  89  & .56$\pm$.08 & 12.1$\pm$8.9           &  98  & .54$\pm$.05 & 26.0$\pm$14.3         & 100  & .52$\pm$.05 & 13.6$\pm$12.9 \\
    \bottomrule
  \end{tabular}
  \vspace{-0.5em}
\end{table*}

\paragraph{Synthetic 3-variable \scm s under different assumptions. }
In our first set of experiments, we consider three classes of \scm s over three variables with the same causal graph as in \fig\ref{fig:causal_graph}.
To test robustness of the different methods to assumptions about the form of the true structural equations, we consider a linear \scm, a non-linear \anm, and a more general, multi-modal \scm\ with non-additive noise.
For further details on the exact form we refer to Appendix~\ref{app:experimental_details}.

Results are shown in Table~\ref{table:3-vars-main}.
We observe that the point-based recourse approaches 
perform (relatively) well in terms of both validity and cost, when their underlying assumptions are met (i.e., $\MH_\lin$ on the linear \scm\ and $\MH_\KR$ on the nonlinear \anm).
Otherwise, validity significantly drops as expected (see, e.g., the results of $\MH_\lin$ on the non-linear $\anm$, or of $\MH_\KR$ on the non-additive \scm).
Moreover, we note that the inferior performance of $\MH_\KR$ compared to $\MH_\lin$ on the linear \scm\ suggests an overfitting problem, which does not occur for its more conservative probabilistic counterpart $\MH_\gp$.
Generally, the individualised approaches $\MH_\gp$ and $\MH_\cvae$ perform very competitively in terms of cost and validity, especially on the linear and nonlinear \anm s.
The subpopulation-based $\cate$ approaches on the other hand, perform particularly well on the challenging non-additive \scm\ (on which the assumptions of \gp\ approaches are violated) where $\cate_\cvae$ achieves perfect validity as the only non-oracle method.
%
%
As expected, the subpopulation-based approaches generally lead to higher cost than the individualised ones, since the latter only aim to achieve recourse  only for a given individual while the former do it for an entire group
(see Fig.~\ref{fig:illustration}).

\paragraph{Semi-synthetic 7-variable \scm\ for loan-approval.  }
We also test our methods on a larger semi-synthetic \scm\ inspired by the German Credit UCI dataset~\cite{murphy1994uci}.
We consider the variables age $A$, gender $G$, education-level $E$, loan amount $L$, duration $D$, income $I$, and savings $S$ with causal graph shown in \fig\ref{fig:german_credit}.
We model age $A$, gender $G$ and loan duration $D$ as non-actionable variables, but consider $D$ to be mutable, i.e., it cannot be manipulated directly but is allowed to change (e.g., as a consequence of an intervention on $L$).
The \scm\ includes linear and non-linear relationships, as well as different types of variables and noise distributions, and is described in more detail in Appendix~\ref{app:experimental_details}.

The results are summarised in Table~\ref{table:german-credit}, where we observe that the insights discussed above similarly apply for data generated from a more complex \scm, and for different classifiers. 
Finally, we show the influence of $\gamma_\LCB$ on the performance of the proposed probabilistic approaches in \fig\ref{fig:gamma_LCB}.  
We observe that lower values of $\gamma_\LCB$ lead to lower validity (and cost), especially for the $\cate$ approaches.  As $\gamma_\LCB$ increases validity approaches the corresponding oracles $\Mstar$ and $\CATEstar$, outperforming the point-based recourse approaches. 
In summary, our probabilistic recourse approaches are not only more robust, but also allow controlling the trade-off between validity and cost using $\gamma_\LCB$.

\begin{table*}[t]
  \small
  \setlength{\tabcolsep}{2pt}
  \caption{Experimental results for the 7-variable \scm\ for loan-approval. We show average performance $\pm1$ standard deviation for $N_\text{runs} = 100$, $N_\text{MC-samples}=100$, and $\gamma_\LCB=2.5$. For linear and non-linear logistic regression as classifiers, we use the gradient-based approach, whereas for the non-differentiable random forest classifier we rely on the brute-force approach (with 10 discretised bins per dimension) to solve the recourse optimisation problems.
    }
  \label{table:german-credit}
  \begin{tabular}{l c c c c c c c c c}
    \toprule
    \multirow{2}{*}{Method} &
    \multicolumn{3}{c}{\textsc{linear log.\ regr.}} &
    \multicolumn{3}{c}{\textsc{non-lin.\ log.\ regr.\ (mlp)}} &
    \multicolumn{3}{c}{\textsc{random forest(brute-force)}} \\
    \cmidrule(r){2-4} \cmidrule(r){5-7} \cmidrule(r){8-10}
                  & $\Validstar$ (\%)   &  $\LCB$   & Cost (\%)        & $\Validstar$ (\%)  &  $\LCB$   & Cost (\%)       & $\Validstar$ (\%) &  $\LCB$   & Cost (\%) \\
    \midrule
    
    $\Mstar$      & 100  & -           & 15.8$\pm$ 7.6         & 100  & -           & 11.0$\pm$7.0         & 100  & -           & 15.2$\pm$7.5 \\
    $\MH_{\lin}$  &  19  & -           & 15.4$\pm$ 7.4         &  80  & -           & 11.0$\pm$6.9         &  94  & -           & 15.6$\pm$7.6 \\
    $\MH_\KR$     &  41  & -           & 15.6$\pm$ 7.5         &  87  & -           & 11.1$\pm$7.0         &  92  & -           & 15.1$\pm$7.4 \\
    $\MH_{\gp}$   & 100  & .50$\pm$.00 & 18.0$\pm$ 7.7         & 100  & .52$\pm$.04 & 11.7$\pm$7.3         & 100  & .66$\pm$.14 & 16.3$\pm$7.4 \\
    $\MH_{\cvae}$ & 100  & .50$\pm$.00 & 16.6$\pm$ 7.6         &  99  & .51$\pm$.01 & 11.3$\pm$6.9         & 100  & .66$\pm$.14 & 15.9$\pm$7.4 \\
    $\CATEstar$   &  93  & .50$\pm$.01 & 22.0$\pm$ 9.4         &  95  & .52$\pm$.05 & 12.0$\pm$7.7         &  98  & .66$\pm$.15 & 17.0$\pm$7.3 \\
    $\cate_\gp$   &  93  & .50$\pm$.02 & 21.7$\pm$ 9.2         &  93  & .51$\pm$.06 & 12.0$\pm$7.4         & 100  & .67$\pm$.15 & 17.1$\pm$7.4 \\
    $\cate_\cvae$ &  94  & .49$\pm$.01 & 23.7$\pm$11.3         &  95  & .51$\pm$.03 & 12.0$\pm$7.8         & 100  & .68$\pm$.15 & 17.9$\pm$7.4 \\
    \bottomrule
  \end{tabular}
    \vspace{-.5em}
\end{table*}

\vspace{\negspace}
\section{Discussion}
\label{sec:discussion}
\vspace{\negspace}

\paragraph{Assumptions, limitations, and extensions.} Throughout the paper, we have assumed a known causal graph and causal sufficiency. 
While this may not hold for all settings, it is the minimal necessary set of assumptions for causal reasoning from observational data alone. 
%
Access to instrumental variables or experimental data may help further relax these assumptions~\cite{angrist1996identification,cooper1999causal, tian2001causal}.
Moreover, if only a partial graph is available or some relations are known to be confounded, one will need to restrict recourse actions to the subset of interventions that are still identifiable~\cite{shpitser2006identification, shpitser2008complete, tian2002general}.
An alternative approach could address causal sufficiency violations by relying on latent variable models to estimate confounders from  multiple causes~\cite{wang2019blessings} or proxy variables~\cite{louizos2017causal}, or to work with bounds on causal effects instead~\citep{balke1994counterfactual, tian2000probabilities}.
We relegate the investigation of these settings to future work.

\paragraph{On the counterfactual vs interventional nature of recourse.}
Given that we address two different notions of recourse---counterfactual/individualised (rung 3) vs.\ interventional/subpopulation-based (rung 2)---one may ask which framing is more appropriate.
Since the main difference is whether the background variables $\U$ are assumed fixed (counterfactual) or not (interventional) when reasoning about actions, we believe that this question is best addressed by thinking about the type of environment and interpretation of $\U$:
if the environment is static, or if $\U$ (mostly) captures unobserved information about the individual, the counterfactual notion seems to be the right one;
if, on the other hand, $\U$ also captures environmental factors which may change, e.g., between consecutive loan applications, then the interventional notion of recourse may be more appropriate.
In practice, both notions may be present (for different variables), and the proposed approaches can be combined depending on the available domain knowledge since each parent-child causal relation is treated separately.
We emphasise that the subpopulation-based approach is also practically motivated by a reluctance to make (parametric) assumptions about the structural equations which are untestable but necessary for counterfactual reasoning.
It may therefore be useful to avoid problems of misspecification, even for counterfactual recourse, as demonstrated experimentally for the non-additive \scm.










\section{Conclusion}
\label{sec:conclusion}
In this work, we studied the problem of algorithmic recourse from a causal perspective. As negative result, we first showed that algorithmic recourse cannot be guaranteed in the absence of perfect knowledge about the underlying $\scm$ governing the world, which unfortunately is not available in practice. 
To address this limitation, we proposed two probabilistic approaches to achieve  recourse under more realistic assumptions. In particular, we  derived i) an individual-level recourse approach based on \gp{s} that approximates the counterfactual distribution by averaging over the family of additive Gaussian \scm{s}; and ii) a subpopulation-based approach, which assumes that only the causal graph is known and  makes use of \cvae{s} to estimate the conditional average treatment effect of an intervention on a subpopulation similar to the individual seeking recourse. 
Our experiments showed that the proposed probabilistic approaches not only result in more robust recourse interventions than approaches based on point estimates of the \scm, but also allows to trade-off validity and cost.   

\clearpage
\section*{Broader Impact}
Our work falls into the domain of explainable AI, which---given the increasing use of often intransparent (``blackbox'') machine learning models in consequential decision making---is of rapidly-growing societal importance.
In particular, we consider the task of enabling and facilitating \textit{algorithmic recourse}, which aims to provide individuals with guidance and recommendations on how best (i.e., efficiently and ideally at low cost) to recover from unfavourable decisions made by an automated system.
To address this task, we build on the framework of causal modelling, which constitutes a principled and mathematically rigorous way to reason about the downstream effects of actions. 
Since correlation does not imply causation, this requires to make additional assumptions based on a general understanding of the domain at hand.
While this may perhaps seem restrictive at first, we point out that other approaches to explainability also make implicit assumptions of a causal nature (e.g., that all features can be changed at will without affecting others in the case of ``counterfactual'' explanations), without explicitly and clearly stating such assumptions.
The advantage of phrasing assumptions about relations between features in the form of a causal graph is that the latter is transparent and intuitive to understand and can thus be challenged by decision makers and individuals alike. 

While theoretically sound from a causal perspective, at the same time, our method is aimed at being practical by not making further assumptions beyond the causal graph which would be hard or impossible to test or challenge empirically---in contrast to the assumed known specification of the full \scm\ in~\cite{karimi2020algorithmic}.
We start from the position that the model is only partially known, and use this to motivate probabilistic approaches to causal algorithmic recourse which take uncertainty into account.
Our approaches are more robust to misspeficiation than naive point-based recourse methods (as demonstrated experimentally): ``system-failure'' is thus fundamentally baked in to our methods.
Moreover, the interpretable ``conservativeness parameter'' $\gamma_\LCB$ can be used trade-off the desired level of robustness against the effort an individual is willing to put into achieving recourse.

The importance of causal reasoning for an ethical and socially beneficial use of ML-assisted technology has also been stressed in a number of recent works in the field of explainability and fair algorithmic decision making~\cite{kusner2017counterfactual, russell2017worlds, kilbertus2017avoiding, zhang2018equality, zhang2018fairness, chiappa2019path, von2020fairness, gupta2019equalizing}.
We thus hope that some of the probabilistic approaches for causal reasoning under imperfect knowledge proposed in this work may also prove useful for related tasks such as fairness, accountability, transparency.
To this end, we have created a user-friendly implementation of all the approaches proposed in this work that we will make publicly available to be scrutinised, re-used, and further improved by the community.
The code is highly flexible and only requires the specification of 
a causal graph, as well as a labelled training dataset.

Since our work considers the classifier as given, it is possible that it is explicitly discriminatory or reproduces biases in the data.
While not directly addressing this problem, our work aims to enable individuals to overcome a potentially unfairly obtained decision with minimal effort.
If successful recourse examples are included in future training data, this may help de-bias a system over time; we consider the intersection of our work with fair decision making in the context of a classifier evolving over time as the result of further data collection~\cite{kilbertus2019fair} a fruitful and important direction for future research.
In addition, observing that certain minority groups consistently receive more costly recourse recommendations may be a way to reveal bias in the underlying decision making system.

While our framework is intended to help individuals increase their chances for a more favourable prediction given that they were, e.g., denied a loan or bail, we cannot rule out a priori, that the same approach could also be used by foes in unintended ways, e.g., to ``game'' a spam filter or similar system built to protect society from harm.
However, since our framework requires the specification of a causal graph which usually requires an understanding of the domain and the causal influences at play, it is unlikely that it could be abused by a purely virtual system without a human in the loop.

\begin{ack}
The authors would like to thank Adrian Weller, Floyd Kretschmar, Junhyung Park, Matthias Bauer, Miriam Rateike, Nicolo Ruggeri, Umang Bhatt, and Vidhi Lalchand for helpful feedback and discussions.
Moreover, a special thanks to Adrià Garriga-Alonso for insightful input on some of the \gp-derivations and to Adrián Javaloy Bornás for invaluable help with the \cvae-training. AHK acknowledges  NSERC and CLS for generous funding support.
\end{ack}

\small
\bibliographystyle{plainnat}
\bibliography{references}

\newpage
\appendix
\numberwithin{equation}{section}

\section{Proofs}
\label{app:proofs}
\subsection{Proof of Proposition \ref{prop:noise_posterior}}

\GPSCMNP*

\begin{proof}
First, note that, by definition, $\u_r$ is independent of $\f_r=(f_r(\x_{\pa(r)}^1), ..., f_r(\x_{\pa(r)}^n))$ given $\X_{\pa(r)}$.
Moreover, it follows from the assumed GP-SCM model in  \eqref{eq:GP_SCM} and Definition \ref{def:GPSCM}, as well as properties of the GP prior, that  both are multivariate Gaussian random variables with distributions given by
\begin{align}
    \u_r &\sim \N(\0, \sigma_r^2\Id ) \quad \text{independently of}\quad \X_{pa(r)}, \quad \text{and}\\
    \f_r\given\X_{pa(r)} &\sim \N(\0, \K),
\end{align}
where $\0$ denotes the zero vector (or matrix, see below) and $\K$ is as defined in Proposition \ref{prop:noise_posterior}.

Since independent multivariate Gaussian random variables are jointly multivariate Gaussian, we thus have
\begin{equation}
\label{eq:proof_multivariate_normal}
\begin{pmatrix}
 \u_r\\
\f_r
\end{pmatrix}
\given \X_{\pa(r)} \sim \N(\0, \Sigma), \quad \text{where} \quad \Sigma=
\begin{pmatrix}
\sigma_r^2\Id & \0\\
\0 & \K
\end{pmatrix}
\end{equation}
Noting that $\x_r=\f_r + \u_r$ and applying a linear transformation to \eqref{eq:proof_multivariate_normal}, we then obtain
\begin{equation}
\label{eq:proof_multivariate_normal_y}
\begin{pmatrix}
 \u_r\\
\x_r
\end{pmatrix}
\given \X_{\pa(r)}
=
\begin{pmatrix}
 \Id & \0 \\
\Id & \Id\\
\end{pmatrix}
\begin{pmatrix}
 \u_r\\
\f_r
\end{pmatrix}
\given \X_{\pa(r)}
\sim \N(\0, \Tilde{\Sigma}), \quad \text{where} \quad \Tilde{\Sigma}=
\begin{pmatrix}
\sigma_r^2\Id & \sigma_r^2\Id\\
\sigma_r^2\Id & \K+\sigma_r^2\Id
\end{pmatrix}.
\end{equation}
Conditioning on $\x_r$ and using the conditioning formula \citep[e.g.,][]{toussaint2011lecture}, the result follows:
\begin{align}
\label{eq:proof_conditional}
 \u_r
\given \X_{pa(r)}, \x_r
&\sim
\N\left(
\0 + \ss_r\Id (\K+\ss_r \Id)^{-1}(\x_r-\0),
\ss_r\Id-\ss_r\Id(\K+\ss_r \Id)^{-1}\ss_r\Id\right)\\
&\sim
\N\left(\ss_r (\K+\ss_r \Id)^{-1}\x_r, \ss_r\left(\Id-\ss_r(\K+\ss_r \Id)^{-1}\right)\right)
\end{align}
\end{proof}

\subsection{Proof of Proposition \ref{prop:counterfactual_distribution}}

\GPSCMCF*

\begin{proof}
We follow the three steps of abduction, action, and prediction for computing counterfactual distributions (see \cref{sec:background_causality} for more details).
Starting from the factual observation $\xF\in\{x^i\}_{i=1}^n$ generated according to
\begin{equation}
\label{eq:app_structural_equation_F}
x^\texttt{F}_r := f_r(\xF_{\pa(r)})+u_r^\texttt{F},
\end{equation}
we first compute the noise posterior (\textit{abduction}).
According to Proposition \ref{prop:noise_posterior} it is given by a marginal of \eqref{eq:noise_posterior}, i.e.,
\begin{equation}
\label{eq:app_abduction_step}
    u_r^\texttt{F}| \X_{\pa(r)}, \x_r \sim\N(\mu_r^F, s_r^\texttt{F})
\end{equation}
where $\mu_r^\texttt{F}$ is given by element $\texttt{F}$ of the mean vector
\begin{equation}
    \bm{\mu}_r = \ss_r (\K+\ss_r \Id)^{-1}\x_r
\end{equation}
and $s_r^\texttt{F}$ is given by element $(\texttt{F}, \texttt{F})$ of the covariance matrix 
\begin{equation}
    S_r = \ss_r\left(\Id-\ss_r(\K+\ss_r \Id)^{-1}\right)
\end{equation}
of the noise posterior given by \eqref{eq:noise_posterior}.

Next, we simulate the hypothetical intervention by updating the structural equation \eqref{eq:app_structural_equation_F} (\textit{action step}),
\begin{equation}
\label{eq:app_prediction}
    x_r^\texttt{F}(\X_{\pa(r)}=\tilde{\x}_{\pa(r)}):=f_r(\tilde{x}_{\pa(r)})+u_r^\texttt{F}.
\end{equation}
The GP predictive posterior at the new input $\tilde{x}_{\pa(r)}$ has distribution \citep[see, e.g.,][]{williams2006gaussian},
\begin{equation}
\label{eq:app_GP_posterior}
f_r(\tilde{x}_{\pa(r)}) | \X_{\pa(r)}, \x_r 
\sim 
\N(
\tilde{\k}^T(\K+\ss_r\Id)^{-1}\x_r,
\tilde{k} - \tilde{\k}^T (\K+\ss_r\Id)^{-1} \tilde{\k}
).
\end{equation}
Substituting \eqref{eq:app_GP_posterior} and \eqref{eq:app_abduction_step} into \eqref{eq:app_prediction} and noting that the sum of two Gaussians is again Gaussian with mean and variance equal to the sums of means and variances of the two individual Gaussians (\textit{prediction step}) completes the proof.
\end{proof}

\subsection{Proof of Proposition \ref{prop:identifiability_of_interventional_dist}}

\IDIVDIST*

\begin{proof}
This is a direct consequence of the properties of causally sufficient (Markovian) causal models, but we include a derivation for completeness.
Recall that $P$ factorises over its underlying causal graph $\G$ as follows,
\begin{equation}
\label{eq:joint_distribution_causal_factorisation}
    p(\X)
    =  \prod_{r\in[d]}p(X_r|\X_{\pa(r)}).
\end{equation}
This joint distribution is transformed by the intervention $do(\X_\I=\th)$ as follows,
\begin{equation}
\label{eq:joint_interventional_distribution}
    P(\X_{-\I}, do(\X_\I=\th))
    = \delta(\X_\I=\th) \prod_{r\in[d]\setminus\I}P(X_r|\X_{\pa(r)}).
\end{equation}
Splitting the non-intervened variables into descendants $\d(\I)$ and non-descendants $\nd(\I)$, and conditioning on the intervened variables $do(\X_\I=\th)$, we obtain
\begin{equation}
    P(\X_{\nd(\I)}, \X_{\d(\I)}|do(\X_\I=\th))
    = \left. \left(\prod_{r\in\nd(\I)\cup \d(\I)}
    P(X_r|\X_{\pa(r)})\right)
    \right|_{\X_\I=\th}.
\end{equation}
As the non-descendants $\X_{\nd(\I)}$ are, by their very definition, not affected by the intervention, we can write
\begin{equation*}
    P(\X_{\nd(\I)}, \X_{\d(\I)}|do(\X_\I=\th))
    = \left. \left( \prod_{r\in\d(\I)}P(X_r|\X_{\pa(r)})\right)
    \right|_{\X_\I=\th} \prod_{r\in \nd(\I)}P(X_r|\X_{\pa(r)}).
\end{equation*}
We can thus condition on a particular value of $\X_{\nd(\I)}$ to  obtain
\begin{equation}
    P\left(\X_{\d(\I)}\given do(\X_\I=\th), \X_{\nd(\I)}
    =\xF_{\nd(\I)}\right)= \left. \left(\prod_{r\in\d(\I)} P(X_{r} \given \X_{pa(r)})\right) \right|_{\X_\I=\th, \X_{\nd(\I)}=\xF_{\nd(\I)}}
\end{equation}
\end{proof}

\clearpage
\section{Additional results}
\label{app:additional_results}
This section presents additional results complementing those from Section~\ref{sec:experiments}. 
Table~\ref{table:3-vars-app-brute-force} presents results that mirror those in Table~\ref{table:3-vars-main}, where the brute-force approach discussed at the beginning of~\cref{sec:optimisation_problem} is used instead of the gradient-based optimisation. Here, each real-valued feature was discretised into 20 bins within the range of its observed values in the training dataset.

\fig~\ref{fig:app_lcb_plots} mirrors the results in \fig\ref{fig:gamma_LCB}, for which a snapshot ($\gamma_\LCB=2.5$) is also provided in Table~\ref{table:german-credit}. Here we show the trade-off between validity and cost by varying the values of $\gamma_\LCB$, using as trained classifiers a non-linear multilayer perceptron (MLP) in (a) and a non-differentiable random forest classifer in (b).  Note that optimisation for the latter can only be done with the brute-force approach.
All these additional results mostly confirm the insights presented in the main body. 

Finally, Table~\ref{table:3-vars-app-grad-descent-frequency} provides a qualitative comparison of the proposed recourse approaches against the oracles and baselines in terms of their selection of intervention targets.
We show empirically, on the three synthetic datasets, 
that \cate\ approaches have more predictable behaviour, as they are less sensitive to model assumptions, and are thus more preferable for the individual seeking recourse under imperfect causal knowledge.

\begin{table*}[h]
  \small
  \setlength{\tabcolsep}{2.5pt}
  \caption{Experimental results for the brute-force (20-bin discretization) approach on different 3-variable \scm s. We show average performance for $N_\text{runs} = 100$, $N_\text{MC-samples}=100$, and $\gamma_\LCB=2$. The relative trends reflect those in Table~\ref{table:3-vars-main}.
    }
  \label{table:3-vars-app-brute-force}
  \begin{tabular}{l c c c c c c c c c}
    \toprule
    \multirow{2}{*}{Method} &
    \multicolumn{3}{c}{\textsc{linear} \scm} &
    \multicolumn{3}{c}{\textsc{non-linear} \anm} &
    \multicolumn{3}{c}{\textsc{non-additive} \scm} \\
    \cmidrule(r){2-4} \cmidrule(r){5-7} \cmidrule(r) {8-10}
                     & $\Validstar$ (\%)  &  $\LCB$  & Cost (\%)   & $\Validstar$ (\%)  &  $\LCB$  & Cost (\%)  & $\Validstar$ (\%)  &  $\LCB$  & Cost (\%) \\
    \midrule
    
    $\Mstar$         & 100  & -           & 11.0$\pm$5.6           & 100  & -           & 20.7$\pm$11.0         & 100 & -           & 15.8$\pm$ 8.9 \\
    $\MH_{\lin}$     & 100  & -           & 11.3$\pm$5.8           &  60  & -           & 19.9$\pm$ 8.9         &  92 & -           & 17.0$\pm$10.4 \\
    $\MH_\KR$        &  95  & -           & 11.2$\pm$5.6           &  88  & -           & 20.5$\pm$10.7         &  47 & -           & 15.8$\pm$10.6 \\
    $\MH_{\gp}$      & 100  & .55$\pm$.04 & 11.6$\pm$5.8           &  99  & .55$\pm$.04 & 21.2$\pm$10.9         &  88 & .58$\pm$.05 & 16.8$\pm$10.3 \\
    $\MH_{\cvae}$    & 100  & .55$\pm$.04 & 11.5$\pm$5.8           &  95  & .55$\pm$.03 & 21.7$\pm$10.7         &  95 & .59$\pm$.07 & 16.9$\pm$10.3 \\
    $\CATEstar$      &  90  & .57$\pm$.07 & 11.0$\pm$5.5           &  95  & .55$\pm$.05 & 22.8$\pm$10.8         &  99 & .57$\pm$.06 & 16.2$\pm$ 8.9 \\
    $\cate_\gp$      &  92  & .56$\pm$.07 & 11.2$\pm$5.5           &  95  & .55$\pm$.04 & 22.8$\pm$10.9         &  85 & .58$\pm$.07 & 16.4$\pm$10.5 \\
    $\cate_\cvae$    &  90  & .57$\pm$.06 & 11.1$\pm$5.4           &  96  & .55$\pm$.03 & 23.0$\pm$10.8         &  94 & .59$\pm$.07 & 16.8$\pm$10.2 \\
    \bottomrule
  \end{tabular}
  \vspace{-1em}
\end{table*}

\begin{figure}[h]
    \centering
    \begin{minipage}[b]{0.5\textwidth}
        \includegraphics[width=\textwidth]{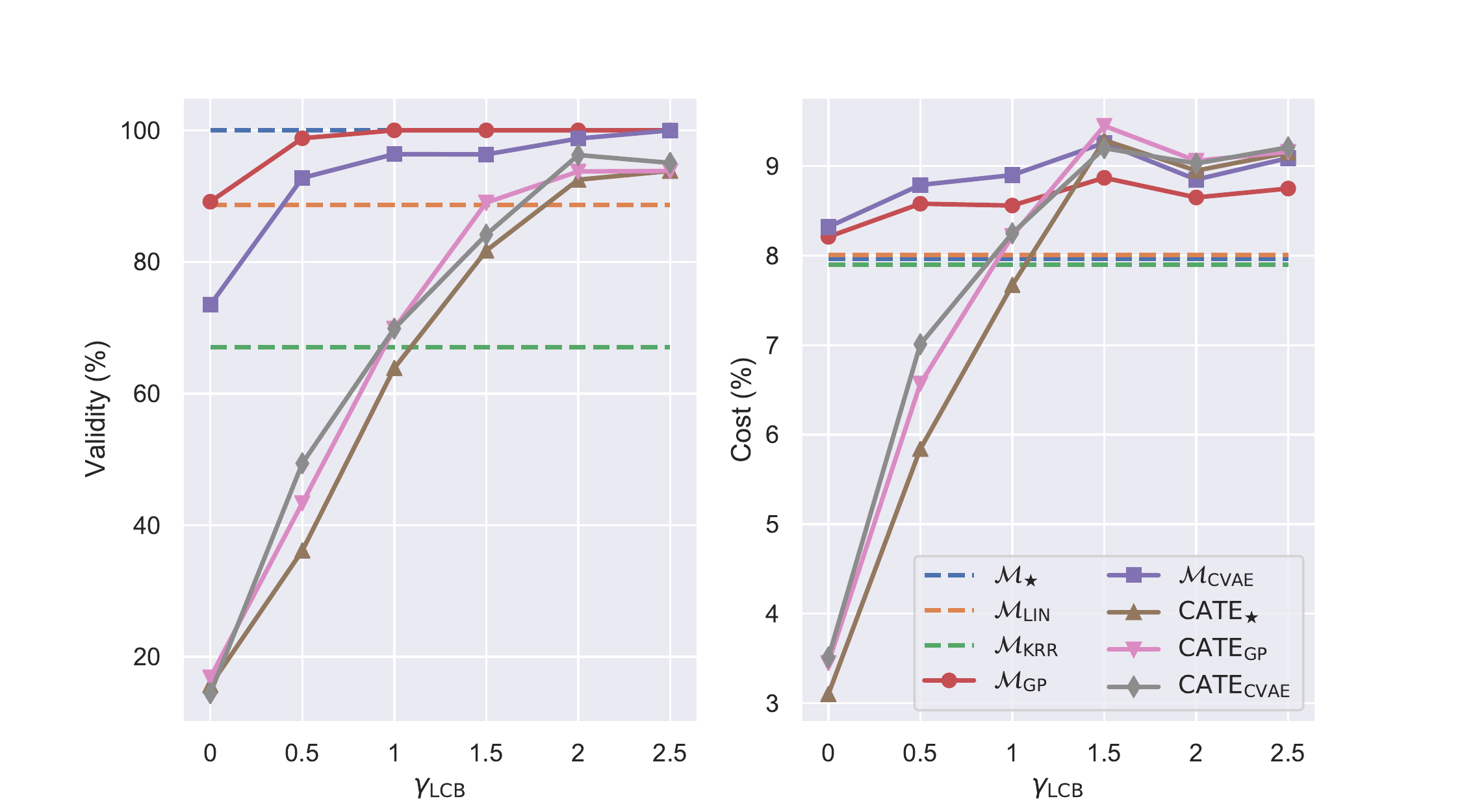}
        \subcaption{MLP}
        \label{fig:gamma_lcb_mlp}
    \end{minipage}%
    \begin{minipage}[b]{0.5\textwidth}
        \includegraphics[width=\textwidth]{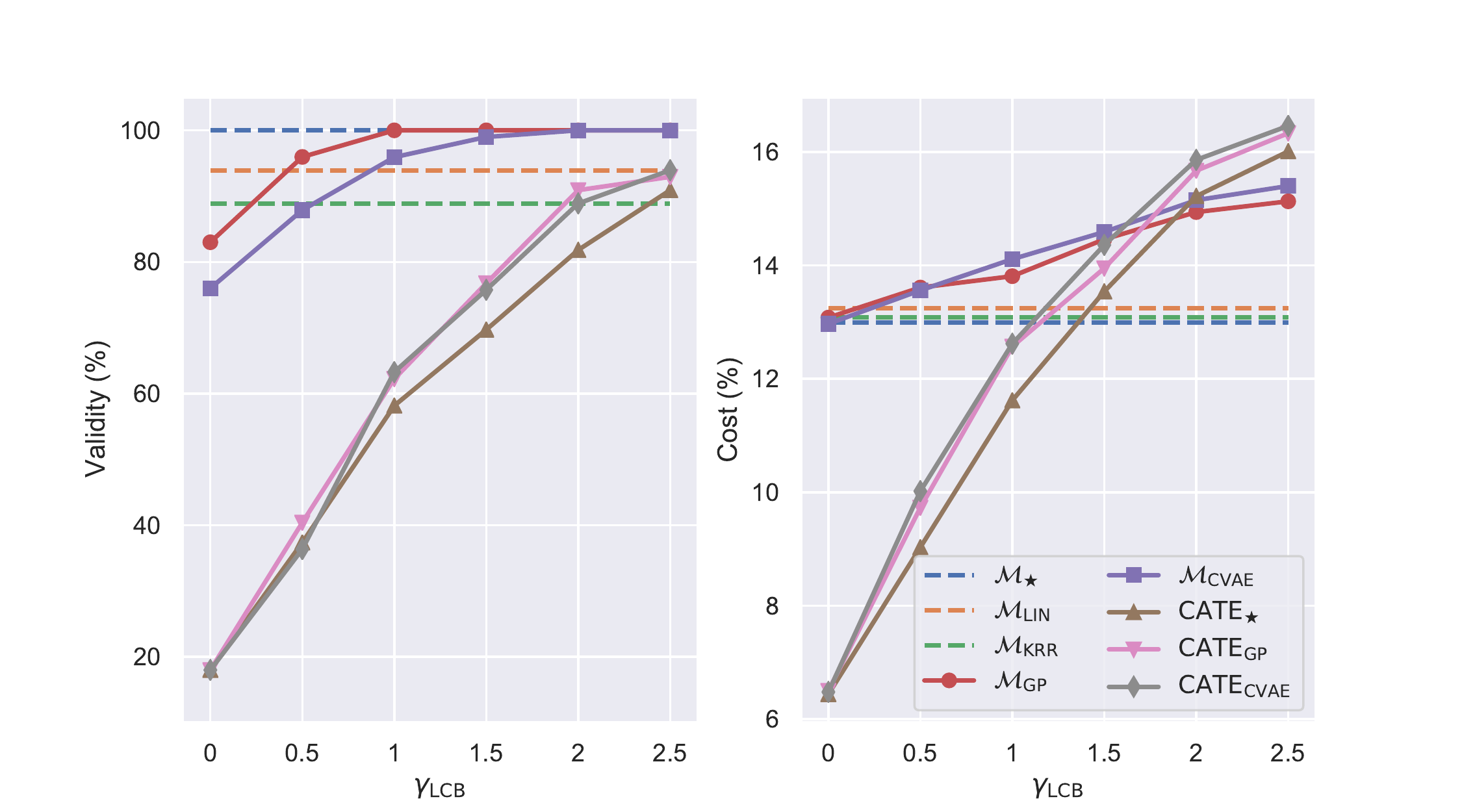}
        \subcaption{random forest}
        \label{fig:gamma_lcb_forest}
    \end{minipage}
    \caption{Trade-off between validity and cost which can be controlled via $\gamma_\LCB$ for the probabilistic recourse methods. Shown is the same setting as in \fig\ref{fig:gamma_LCB} using instead a non-linear logistic regression in the form of a multilayer perceptron (MLP; left), and a random forest (right) as classifiers $h$.}
    \vspace{-1em}
    \label{fig:app_lcb_plots}
\end{figure}

\begin{table*}[h]
  \scriptsize
  \setlength{\tabcolsep}{1pt}
  \caption{Experimental results for the gradient-descent approach on different 3-variable \scm s (top to bottom: linear \scm, non-linear \anm, non-additive \scm). We show average performance for $N_\text{runs} = 100$, $N_\text{MC-samples}=100$, and $\gamma_\LCB=2$, and display the number (out of $N_\text{runs}$) of performed interventions on all subsets of variables by each recourse type. The two right-most columns display how many of the intervention sets for each recourse type agreed with the suggestions made by the oracle methods, $\Mstar$ and $\CATEstar$, respectively. 
  We observe that interventions proposed by the subpopulation-based oracle often differ from the ones proposed at the individual level, which can be visually explained by \fig\ref{fig:illustration}.
  Importantly, we observe general agreement among all \cate\ approaches in their selection of intervened-upon variables. In contrast, 
  we observe that individual-based methods deviate away from their oracle (i.e., $\Mstar$) in their selection of variables to intervene upon for recourse.  This result further suggest that the \cate\ approaches presented in this work exhibit more predictable behaviour, as they are less sensitive to model assumptions, and are thus more preferable for the individual seeking recourse under imperfect causal knowledge.
    }
  \label{table:3-vars-app-grad-descent-frequency}
  \begin{tabular}{l c c c c c c c c c c c c}
    \toprule
    \multirow{2}{*}{Method} & \multicolumn{3}{c}{\scm}                 & \multicolumn{7}{c}{\textsc{Intervention Set}} & \multicolumn{2}{c}{\textsc{Identical Int. Set}}           \\ \cmidrule(r){2-4} \cmidrule(r){5-11} \cmidrule(r) {12-13}
                           & $\Validstar$ (\%)  &  $\LCB$  & Cost (\%) & $\{X_1\}$ & $\{X_2\}$ & $\{X_3\}$ & $\{X_1,X_2\}$ & $\{X_1,X_3\}$ & $\{X_2,X_3\}$ & $\{X_1,X_2,X_3\}$ & $\Mstar$ & $\CATEstar$  \\
    \midrule
  
    $\Mstar$               & 100  & -           & 10.9$\pm$7.9         &  0        & 25        & 0         & 56            & 0             &  0            & 19                & 100      &  23          \\
    $\MH_{\lin}$           & 100  & -           & 11.0$\pm$7.0         &  0        & 26        & 0         & 50            & 0             &  1            & 23                &  52      &  23          \\
    $\MH_\KR$              &  90  & -           & 10.7$\pm$6.5         &  0        & 22        & 0         & 44            & 0             &  0            & 34                &  54      &  27          \\
    $\MH_{\gp}$            & 100  & .55$\pm$.04 & 12.2$\pm$8.3         &  0        &  6        & 0         & 13            & 0             &  7            & 74                &  25      &  61          \\
    $\MH_{\cvae}$          & 100  & .55$\pm$.07 & 11.8$\pm$7.7         &  0        & 12        & 0         & 25            & 0             &  5            & 58                &  31      &  57          \\
    $\CATEstar$            &  90  & .56$\pm$.07 & 11.9$\pm$9.2         &  0        &  6        & 0         & 11            & 0             & 13            & 70                &  23      & 100          \\
    $\cate_\gp$            &  93  & .56$\pm$.05 & 12.2$\pm$8.4         &  0        &  3        & 0         &  9            & 1             & 15            & 72                &  18      &  76          \\
    $\cate_\cvae$          &  89  & .56$\pm$.08 & 12.1$\pm$8.9         &  0        &  6        & 1         & 11            & 0             & 16            & 66                &  18      &  78          \\

    \midrule
  
    $\Mstar$               & 100  & -           & 20.1$\pm$12.3        & 70        &  0        & 0         &  2            &16             &  0            & 11                &  99      &  17          \\
    $\MH_{\lin}$           &  54  & -           & 20.6$\pm$11.0        & 13        &  0        & 0         &  0            &81             &  0            &  5                &  20      &  41          \\
    $\MH_\KR$              &  91  & -           & 20.6$\pm$12.5        & 65        &  0        & 0         &  1            &23             &  0            & 10                &  76      &  22          \\
    $\MH_{\gp}$            & 100  & .54$\pm$.03 & 21.9$\pm$12.9        & 39        &  0        & 0         &  0            &38             &  0            & 22                &  54      &  38          \\
    $\MH_{\cvae}$          &  97  & .54$\pm$.05 & 22.6$\pm$12.3        & 33        &  0        & 0         &  0            &51             &  0            & 15                &  45      &  42          \\
    $\CATEstar$            &  97  & .55$\pm$.05 & 26.3$\pm$21.4        &  4        &  0        & 0         &  0            &44             &  2            & 49                &  17      &  99          \\
    $\cate_\gp$            &  94  & .55$\pm$.06 & 25.0$\pm$14.8        &  4        &  1        & 0         &  0            &37             &  4            & 53                &  11      &  69          \\
    $\cate_\cvae$          &  98  & .54$\pm$.05 & 26.0$\pm$14.3        &  3        &  0        & 0         &  1            &32             &  1            & 62                &  12      &  70          \\

    \midrule
  
    $\Mstar$               & 100  & -           & 13.2$\pm$11.0        &  0        &  0        & 1         &  0            &11             & 78            &  7                &  97      &  78          \\
    $\MH_{\lin}$           &  98  & -           & 14.0$\pm$13.5        &  0        &  0        & 0         &  1            & 0             & 85            & 11                &  81      &  77          \\
    $\MH_\KR$              &  70  & -           & 13.2$\pm$11.6        &  0        & 17        & 0         &  4            &10             & 59            &  7                &  55      &  53          \\
    $\MH_{\gp}$            &  95  & .52$\pm$.04 & 13.4$\pm$12.8        &  3        &  1        & 2         &  0            & 0             & 82            &  9                &  73      &  78          \\
    $\MH_{\cvae}$          &  95  & .51$\pm$.01 & 13.4$\pm$12.2        &  0        &  3        & 1         &  5            & 2             & 71            & 15                &  72      &  76          \\
    $\CATEstar$            & 100  & .52$\pm$.02 & 13.5$\pm$13.0        &  0        &  0        & 2         &  0            & 9             & 77            &  9                &  78      &  97          \\
    $\cate_\gp$            &  94  & .52$\pm$.03 & 13.2$\pm$13.1        &  3        &  1        & 5         &  0            & 3             & 73            & 12                &  70      &  76          \\
    $\cate_\cvae$          & 100  & .52$\pm$.05 & 13.6$\pm$12.9        &  0        &  1        & 2         &  0            & 1             & 82            & 11                &  78      &  78          \\
    
    \bottomrule
  \end{tabular}
  \vspace{-1em}
\end{table*}

\clearpage
\section{(Non-)identifability of \scm s under different assumptions}
\label{app:nonidentifiability}
In general form, i.e., without any further assumption on the structural equations $\mathbf{S}$ or noise distribution $P_\U$, \scm s are not identifiable from data alone, meaning that there are multiple different \scm s (possibly with different underlying causal graphs) which imply the same observational distribution \cite{peters2017elements}.
One possible construction relies on the use of the inverse cumulative distribution function (cdf) in combination with uniformly-distributed random variables \cite{darmois1951analyse} and is also used in non-identifiability proofs for non-linear independent component analysis (ICA)~\citep{hyvarinen1999nonlinear}.
Even knowing the causal graph is generally not enough as summarised in the following proposition.

\begin{proposition}
Even when the causal graph is known, the conditionals $P(X_r|\X_{\pa(r)})$ alone are insufficient to uniquely determine the structural equations $X_r:=f_r(\X_{\pa(r)}, U_r)$ without further assumptions.
\begin{proof}
This can be shown by using the following argument from~\cite[Footnote 1]{janzing2010causal} (adapted to our notation):
\begin{center}
 ``\textit{let $U_r$ consist of (possibly uncountably many) real-valued random variables $U_r[\x_{\pa(r)}]$, one for each value $\x_{\pa(r)}$ of the parents $\X_{\pa(r)}$.
Let $U_r[\x_{\pa(r)}]$ be distributed according to $P_{X_r\given \x_{\pa(r)}}$ and define $f_r(\x_{\pa(r)}, U_r):=U_r[\x_{\pa(r)}]$. Then $X_r\given \X_{\pa(r)}$ has distribution $P_{X_r\given \X_{pa(r)}}$}''.   
\end{center}
We can now build on this formulation to construct a second \scm\ with the same observational distribution and causal graph, e.g., by shifting the noise variables and structural equations by some fixed constant $C$ as follows.

For $r\in [d]$, define $Y_r:=X_r-C$.
Let $\Tilde{U}_r$ consist of (possibly uncountably many) real-valued random variables $\Tilde{U}_r[\x_{\pa(r)}]$, one for each value $\x_{\pa(r)}$ of the parents $\X_{\pa(r)}$.
Let $\Tilde{U}_r[\x_{\pa(r)}]$ be distributed according to $P_{Y_r\given \x_{\pa(r)}}$ and define $f_r(\x_{\pa(r)}, \Tilde{U}_r):=\Tilde{U}_r[\x_{\pa(r)}]+C$.
Then $X_r\given \X_{\pa(r)}$ also  has distribution $P_{X_r\given \X_{pa(r)}}$, but for $C\neq 0$ the structural equations and noise distributions are different from the previous construction.
\end{proof}
\end{proposition}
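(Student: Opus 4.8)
The plan is to exhibit two structural causal models that share the causal graph $\G$ and induce exactly the same collection of causal conditionals $\{P(X_r\mid\X_{\pa(r)})\}_{r=1}^d$ — hence the same observational distribution factorising along $\G$ — yet differ in their structural equations. Since a claim of the form ``the conditionals do not determine the $f_r$'' is naturally established by a counterexample, producing one such pair suffices, and it should be done at the level of a single mechanism $X_r$.

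First I would invoke the canonical ``saturated-noise'' construction of \cite{janzing2010causal, darmois1951analyse}: for each $r$, let the noise $U_r$ be the (possibly uncountable) family $\{U_r[\x_{\pa(r)}]\}_{\x_{\pa(r)}}$ of real random variables indexed by the parent configurations, with $U_r[\x_{\pa(r)}]\sim P_{X_r\mid\x_{\pa(r)}}$, and set $f_r(\x_{\pa(r)},U_r):=U_r[\x_{\pa(r)}]$. A one-line check shows that substituting a fixed value $\x_{\pa(r)}$ into $f_r$ returns a variable distributed as $P_{X_r\mid\x_{\pa(r)}}$, so this \scm\ reproduces every conditional and therefore the joint $P_\X$. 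Next I would build a second \scm\ with the same graph and conditionals but manifestly different mechanisms by a rigid translation: pick a constant $C\neq 0$, set $Y_r:=X_r-C$, define $\tilde U_r=\{\tilde U_r[\x_{\pa(r)}]\}$ analogously with $\tilde U_r[\x_{\pa(r)}]\sim P_{Y_r\mid\x_{\pa(r)}}$, and take $\tilde f_r(\x_{\pa(r)},\tilde U_r):=\tilde U_r[\x_{\pa(r)}]+C$. Substituting a fixed $\x_{\pa(r)}$ yields $\tilde U_r[\x_{\pa(r)}]+C$, distributed as $P_{Y_r\mid\x_{\pa(r)}}$ shifted by $C$, i.e.\ exactly $P_{X_r\mid\x_{\pa(r)}}$; so this \scm\ realises all the same conditionals, yet for $C\neq0$ both $\tilde f_r\neq f_r$ and the noise laws $P_{\tilde U_r}\neq P_{U_r}$. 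This contradicts uniqueness, and identifying the $f_r$ evidently requires extra structure (e.g.\ additivity and restrictions on $f_r$, $P_\U$ as in \eqref{eq:ANM}) that rules out such translations.

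The step I expect to require the most care is the second construction: one must confirm that the indexed noise families are genuine, jointly well-defined, mutually independent random objects, and — more substantively — that ``different structural equations'' is not being used vacuously, i.e.\ that the translated model is not merely a relabelling of the latent variables of the first. This is immediate once $C\neq0$, since $f_r$ and $\tilde f_r$ are different maps on the common domain $\mathcal{X}_{\pa(r)}\times\mathcal{U}_r$ and the marginal noise laws genuinely disagree; the bookkeeping with uncountable index sets is routine measure theory, which I would only sketch rather than carry out in full.
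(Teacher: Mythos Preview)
Your proposal is correct and follows exactly the same approach as the paper's own proof: invoke the saturated-noise construction from \cite{janzing2010causal} to build one \scm\ reproducing the conditionals, then shift by a nonzero constant $C$ to obtain a second \scm\ with the same conditionals but different structural equations and noise laws. Your additional remarks about well-definedness of the indexed noise families and non-vacuity of ``different'' go slightly beyond the paper's treatment but are in the same spirit.
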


In the case of the \cvae-\scm\ model from~\eqref{eq:CVAE-SCM} the setting is slightly less general than the above, since we additionally assume that: (i) the noise distributions are isotropic multivariate Gaussian distributions of fixed dimension, $\z_r\sim\N_{d_{\z_r}}(\0,\Id)$; and (ii) the structural equations $D_r$ are from the class of functions that can be expressed as feedforward neural networks if fixed width and depth with learnable parameters $\psi_r$.

Unfortunately, we are not aware of any identifiability results for this particular setting, and further investigation into this matter is beyond the scope of the current work.
It is interesting to note, however, that the \cvae-\scm\ from~\eqref{eq:CVAE-SCM} can be understood as a non-linear extension of the linear Gaussian model with equal error variances considered by~\cite{peters2014identifiability}, for which identifiability has been shown.

In general, there seem to be very few works addressing identifiability of \scm s in the non-linear case; we refer to~\cite[][\S 7.1]{peters2017elements} for an overview of existing results.
Of particular interest for our setting is the  post-nonlinear model of~\cite{zhang2009identifiability}, which refers to the setting in which a non-linearity $g$ is applied on top of an \anm, i.e., $X_r:=g_r(f_r(\X_{\pa(r)}) + U_r)$, and for which complete conditions on $\{f_r, g_r\}$ have been provided that lead to identifiability.
Given the form of the decoders $D_r$---feedforward neural networks with stacked layers of simple non-linearities applied to linear transformations of the previous layers' output---it may be possible that the \cvae-\scm\ from~\eqref{eq:CVAE-SCM} can be interpreted as a nested post-nonlinear model.
We consider this an interesting direction, but leave further investigations into this matter for future work.

\clearpage
\section{Further details on \cvae\ training}
\label{app:cvae_training_details}
To learn the \cvae\ latent variable models, we perform amortised variational inference with approximate posteriors $q$ parameterised by encoders $\enc_r$ in the form of neural nets with parameters $\phi_r$,
\begin{equation}
    \label{eq:approx_posterior}
    p_{\psi_r}(\z_r\given x_r, \x_{\pa(r)})\approx q_{\phi_r}(\z_r\given x_r, \x_{\pa(r)}):=\N(\hat{\mu}_r, \hat{\sigma}_r^2), \quad \quad (\hat{\mu}_r, \hat{\sigma}_r^2) := \enc_r(x_r, \x_{\pa(r)}; \phi_r).
\end{equation}
The training objective in form of the evidence lower bound (ELBO) given data $\{\x^i\}_{i=1}^n$ is given by
\begin{equation}
    \label{eq:CVAE_objective}
    \mathcal{L}_r(\psi_r, \phi_r) 
    = \medop\sum_{i=1}^n \E_{q_{\phi_r}(\z|x_r^i, \x_{\pa(r)}^i)}\Big[\norm{x_r^i-\dec_r(\x_{\pa(r)}^i, \z; \psi_r)}^2\Big]+\beta_r \KLD{q_{\phi_r}(\z| x_r^i, \x_{\pa(r)}^i)}{p(z)}
\end{equation}
We learn both $\psi_r$ and $\phi_r$ simultaneously via stochastic gradient descend on $\mathcal{L}_r$, with gradients computed by Monte Carlo sampling from $q_{\phi_r}$ with reparametrisation.
Since the pairs of encoder and decoder parameters $(\psi_r, \phi_r)$ are independent for different $r$, this can be done in parallel.

\subsection{Hyperparameter selection for $\cvae$ training}

A $\cvae$ model was trained for every $\X_r|\X_{\pa(r)}$ relation. Generally, hyperparameters were selected by comparing the distribution of real samples from the dataset against reconstructed samples from the trained $\cvae$ obtained by sampling noise from the prior. The selection of hyperparameters was done either manually, or by performing a grid search over various encoder and decoder architectures, latent-space dimensions, and values of the hyperparameters $\beta_r$ that trade off the MSE and KL terms in the \cvae\ objective~\eqref{eq:CVAE_objective}.
For the case of automatic selection, the setup resulting in the smallest maximum mean discrepancy (MMD) statistic~\cite{gretton2012kernel} between real and reconstructed samples was chosen as  hyperparameter configuration.
Further details on the search space considered and the selected values are provided in Table~\ref{table:cvae_hyperparams}.

\newcommand{\tx}{$\times$}
\begin{table*}[h]
  \small
  \setlength{\tabcolsep}{4.5pt}
  \caption{Selection of hyperparameters for $\cvae$ training was either performed manually (for Linear \scm, Non-linear \anm, Non-additve \scm) or automatically (for 7-variable semi-synthetic loan approval) by selecting the setting that resulted in the minimum MMD statistic between real and reconstructed samples.}
  \label{table:cvae_hyperparams}
  \begin{tabular}{l l r r c c}
    \toprule
    \scm                                                                     & Conditional          & Encoder Arch.    &  Decoder Arch.   & Latent Dim.            & $\lambda_\text{KLD}$  \\
    \midrule
    
    \multirow{2}{*}{Linear \scm}                                             & $X_2|X_1,$           & 1\tx32\tx32\tx32 & 5\tx5\tx1        & 1                      & 0.01                  \\
                                                                             & $X_3|X_1,X_2$        & 1\tx32\tx32\tx32 & 32\tx32\tx32\tx1 & 1                      & 0.01                  \\
    \multirow{2}{*}{Non-linear \anm}                                         & $X_2|X_1,$           & 1\tx32\tx32      & 32\tx32\tx1      & 5                      & 0.01                  \\
                                                                             & $X_3|X_1,X_2$        & 1\tx32\tx32\tx32 & 32\tx32\tx1      & 1                      & 0.01                  \\
    \multirow{2}{*}{Non-additve \scm}                                        & $X_2|X_1,$           & 1\tx32\tx32\tx32 & 32\tx32\tx1      & 3                      & 0.5                   \\
                                                                             & $X_3|X_1,X_2$        & 1\tx32\tx32\tx32 & 5\tx5\tx1        & 3                      & 0.1                   \\

    \midrule
    
    \multirow{5}{*}{\parbox{3.5cm}{7-variable semi-synthetic loan approval}} & \multirow{5}{*}{any} &                  & 2\tx1            & \multirow{5}{*}{1,2} &                       \\
                                                                             &                      & 1\tx3\tx3        & 2\tx2\tx1        &                        & 5, 1, 0.5, 0.1,   \\
                                                                             &                      & 1\tx5\tx5        & 3\tx3\tx1        &                        &  0.05, 0.01,  \\
                                                                             &                      & 1\tx3\tx3\tx3    & 5\tx5\tx1        &                        &  0.005  \\
                                                                             &                      &                  & 3\tx3\tx3\tx1    &                        &                       \\
    \bottomrule
  \end{tabular}
  \vspace{-1em}
\end{table*}

\clearpage
\section{Experimental details, hyperparameter choices, and specification of \scm s}
\label{app:experimental_details}
\subsection{Specification of \scm s used in our experiments}
The following is a specification of all \scm s used in our experiments on synthetic and semi-synthetic data, both for data generation and to evaluate the validity of recourse actions proposed by the different approaches by computing the corresponding counterfactual in the ground-truth \scm s.

In addition, we also specify the model used to generate training labels. 
Note, however, that these labels are only used to train a new classifier (e.g., a logistic regression, multi-layer perceptron, or random forest) from scratch: this is the $h(\x)$ referred to in the main paper.
The label generating process is thus only used for obtaining labels to train a classifier on and is subsequently disregarded in favour of $h$.
 
In selecting the structural equations and label generating process, we tried to pick combinations that resulted in roughly centred features, as well as roughly balanced datasets (i.e., with a similar proportion of positive and negative training examples) that are not perfectly linearly-separable (i.e., with some class overlap).
Moreover, we tried to select settings that result in a diverse set of intervention targets selected by the oracle for different factual instances, i.e., we try to avoid situations in which the optimal action is to always intervene on the same (set of) variable(s).
To induce more interesting behaviour, we sample root nodes from mixtures of Gaussians.

\subsubsection{3-variable synthetic \scm s used for Table~\ref{table:3-vars-main}}

A visual summary of the 3-variable synthetic \scm s used for Table~\ref{table:3-vars-main} is provided in \fig\ref{fig:3-var-data}.

\begin{figure}[h]
    \begin{subfigure}[b]{0.33\textwidth}
        \centering
        \includegraphics[width=\textwidth]{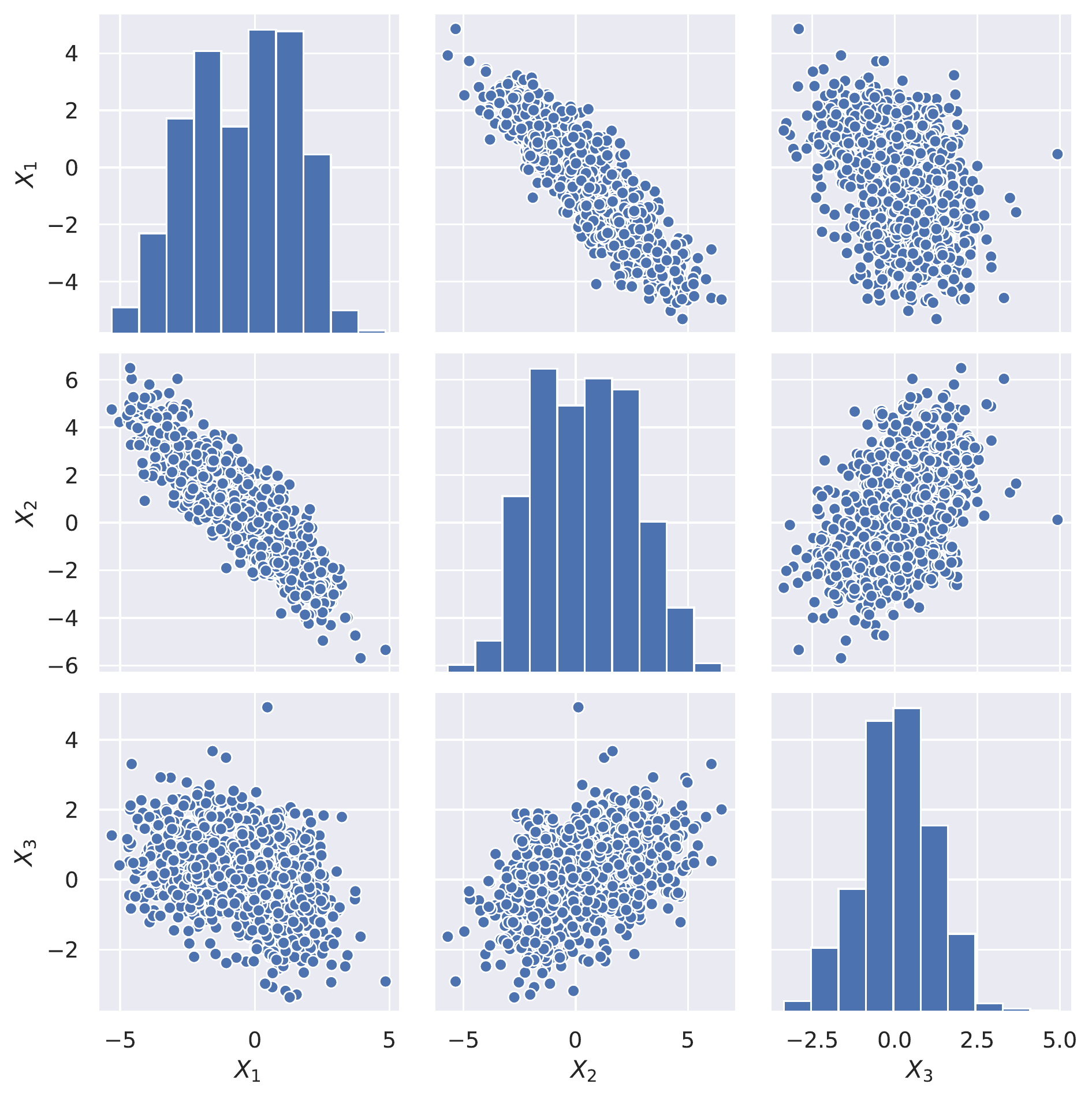}
        \caption{Linear \scm}
        \label{fig:sanity-3-lin}
    \end{subfigure}
    \begin{subfigure}[b]{0.33\textwidth}
        \centering
        \includegraphics[width=\textwidth]{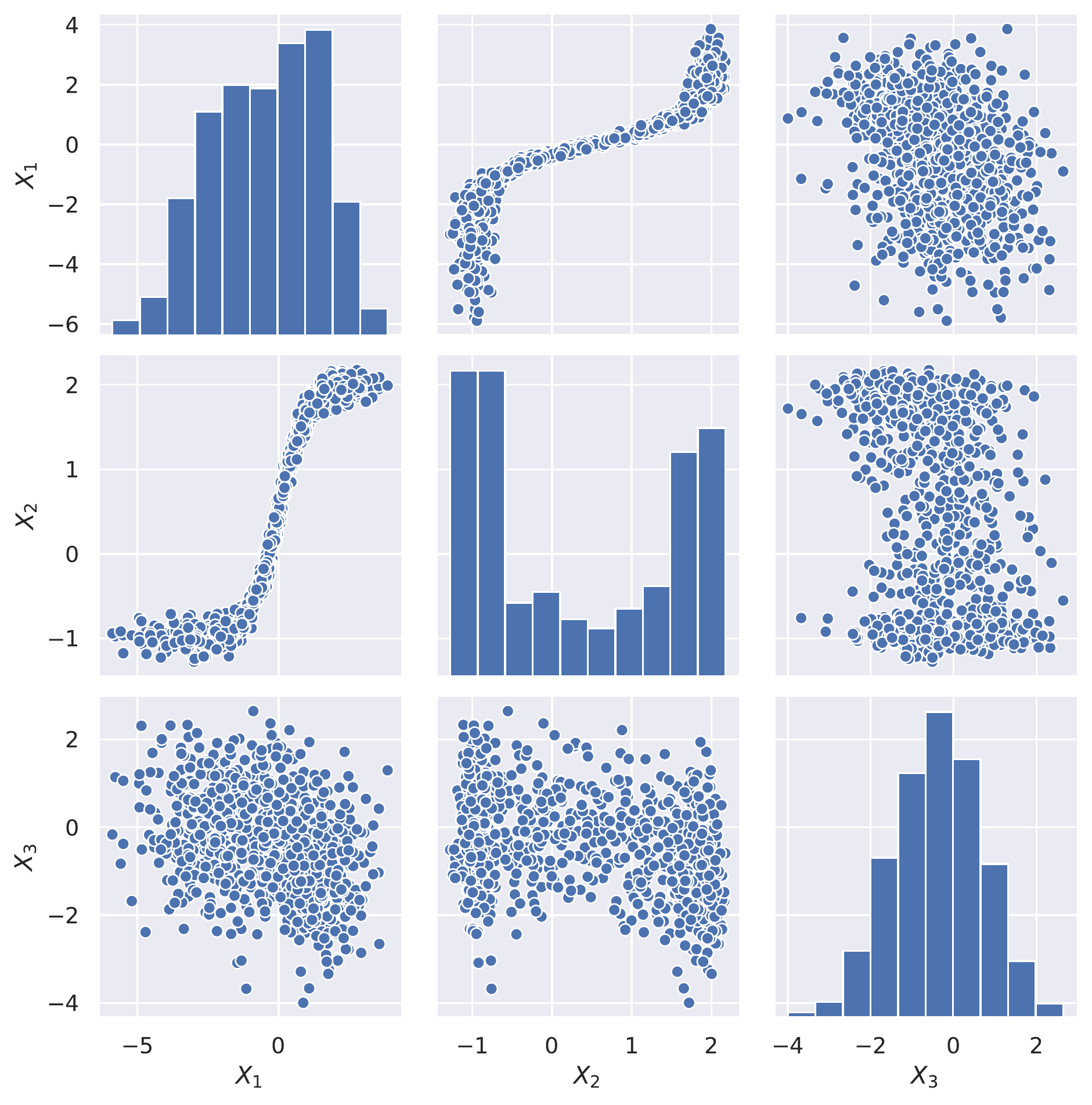}
        \caption{Non-linear \anm}
        \label{fig:sanity-3-anm}
    \end{subfigure}%
    \begin{subfigure}[b]{0.33\textwidth}
        \centering
        \includegraphics[width=\textwidth]{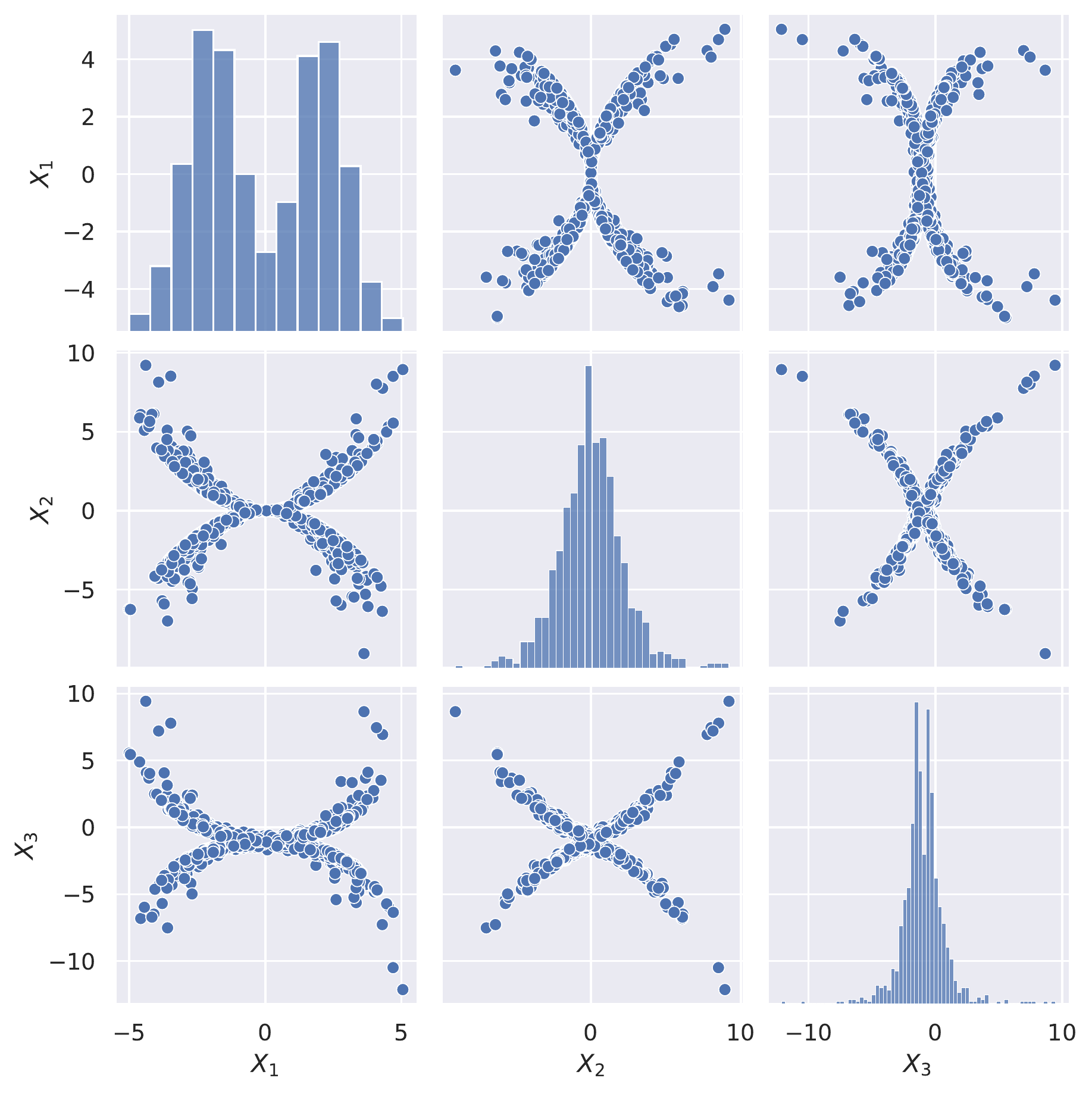}
        \caption{Non-additive \scm}
        \label{fig:sanity-3-gen}
    \end{subfigure}
    \caption{Histograms and scatter plots of pairwise feature relations for the synthetic 3-variable \scm s.}
    \label{fig:3-var-data}
\end{figure}

\paragraph{Linear \scm:}
The linear 3-variable \scm\ consists of the following structural equations and noise distributions:
\begin{align}
    X_1 &:= U_1,  &U_1&\sim \text{MoG}\Big(0.5\N(-2, 1.5)+0.5\N(1, 1)\Big)\\
    X_2 &:= -X_1+U_2,  &U_2&\sim \N(0, 1)\\
    X_3 &:= 0.05X_1 +0.25X_2+U_3, &U_3&\sim\N(0, 1) 
\end{align}

\paragraph{Non-linear \anm:}
The non-linear 3-variable \anm\ consists of the following structural equations and noise distributions:
\begin{align}
    X_1 &:= U_1,  &U_1&\sim \text{MoG}\Big(0.5\N(-2, 1.5)+0.5\N(1, 1)\Big)\\
    X_2 &:= -1+\frac{3}{1+e^{-2X_1}}+U_2,  &U_2&\sim \N(0, 0.1)\\
    X_3 &:= -0.05X_1 +0.25X_2^2+U_3, &U_3&\sim\N(0, 1) 
\end{align}

\paragraph{Non-additve \scm:}
The non-additive 3-variable \scm\ consists of the following structural equations and noise distributions:
\begin{align}
    X_1 &:= U_1,  &U_1&\sim \text{MoG}\Big(0.5\N(-2.5, 1)+0.5\N(2.5, 1)\Big)\\
    X_2 &:= 0.25\, \text{sgn}(U_2)X_1^2(1+U_2^2),  &U_2&\sim \N(0, 0.25)\\
    X_3 &:= -1 + 0.1\, \text{sgn}(U_3)(X_1^2+X_2^2)+U_3, &U_3&\sim\N(0, 0.25^2) 
\end{align}

\paragraph{Label generation:}
For all 3-variable \scm s, labels $Y$ were sampled according to
\begin{equation}
    Y\sim \text{Bernoulli}\left(\left(1+e^{-2.5\rho^{-1}(X_1+X_2+X_3)}\right)^{-1}\right)
\end{equation}
where $\rho$ is the average of $(X_1+X_2+X_3)$ across all training samples.

\subsubsection{7-variable semi-synthetic loan approval \scm\ used for Table~\ref{table:german-credit}}
For the semi-synthetic dataset, we wanted to capture some relations between the involved variables that seemed somewhat intuitive to us and to some limited extent reflect a loan approval setting in the real-world:
\begin{itemize}
\item loan amount and duration being largest for mid-aged people who may want to build a house and start a family, and smaller for younger and older people;
\item loan duration increasing with loan amount due to the an upper limit on monthly payments that can be afforded
\item savings increasing once income passes a certain (minimal-sustenance)  threshold;
\item income increasing with age;
\item education increasing with age initially before eventually saturating;
\item gender differences in income and (access to) education due to existing gender-discrimination and inequality of opportunities in the population; 
\end{itemize}

A visual summary of the 7-variable semi-synthetic loan \scm is shown in \fig\ref{fig:7-var-data}.

\begin{figure}
    \centering
    \includegraphics[width=\textwidth]{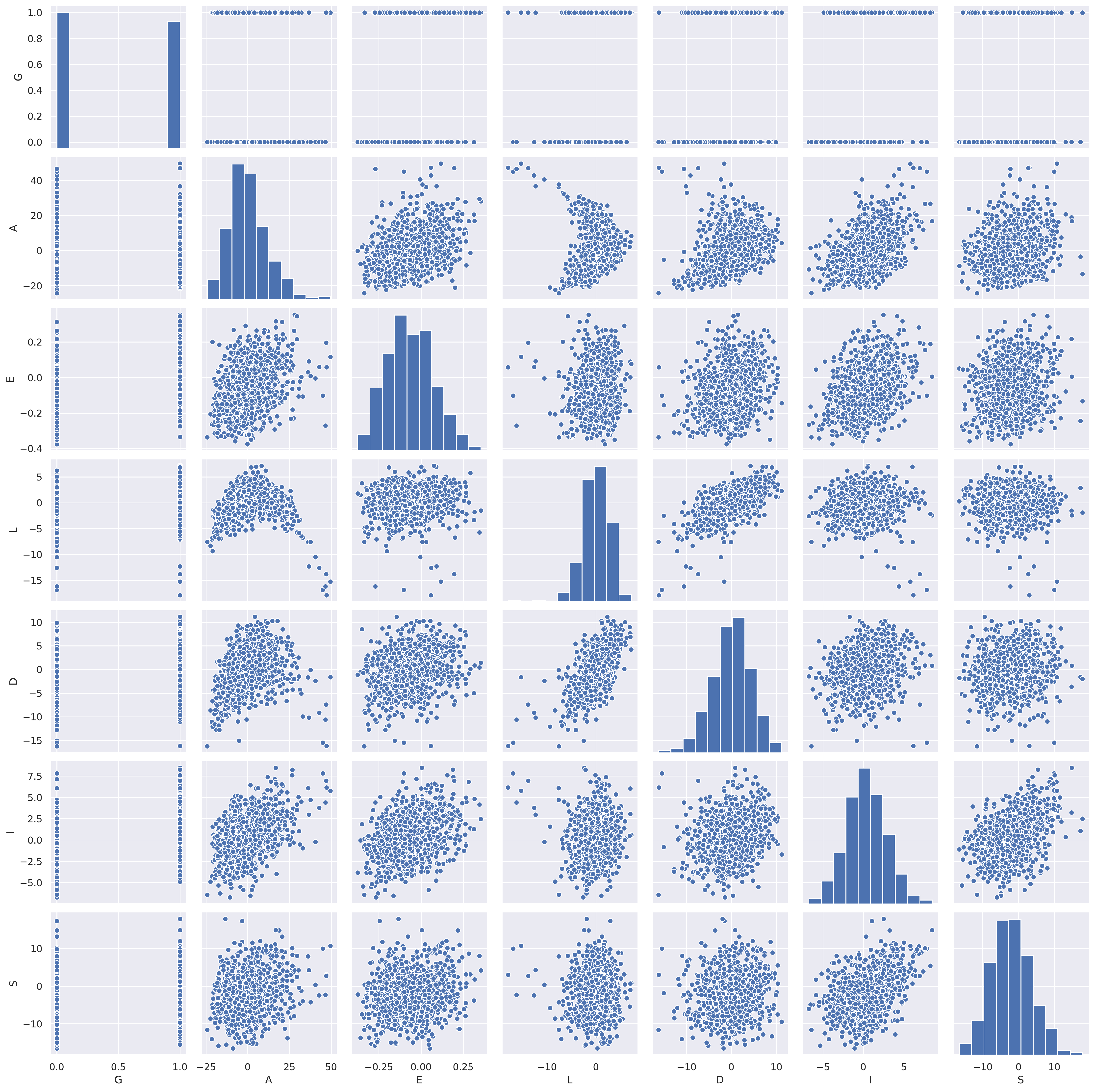}
    \caption{Histograms and scatter plots of pairwise feature relations for the semi-synthetic loan \scm.}
    \label{fig:7-var-data}
\end{figure}

\paragraph{Semi-synthetic \scm:}
The loan approval \scm\ consists of the following structural equations and noise distributions:
\begin{align}
    G &:= U_G,  &U_G&\sim \text{Bernoulli}(0.5)\\
    A &:= -35+U_A,  &U_A&\sim \text{Gamma}(10, 3.5)\\
    E &:= -0.5 + \left(1 + e^{-\left(-1 + 0.5 G + \left(1 + e^{- 0.1 A}\right)^{-1} + U_E\right)}\right)^{-1},  &U_E&\sim \N(0,0.25)\\
    L &:= 1 + 0.01 (A - 5) (5 - A) + G + U_L, &U_L&\sim \N(0,4)\\
    D &:= -1 + 0.1A + 2G + L + U_D, &U_D&\sim \N(0, 9)\\
    I &:= -4 + 0.1(A + 35) + 2G + G E + U_I, &U_I&\sim \N(0, 4)\\
    S &:=  -4 + 1.5 \mathbb{I}_{\{I > 0\}} I + U_S, &U_S&\sim\N(0, 25)
\end{align}
Note that variables in the above \scm\ often have a relative meaning in terms of deviation from the mean, e.g., we centre the Gamma-distributed age around its mean of 35, so that $A$ has the meaning of ``age-difference from the mean of 35'' (and similarly for other variables).

\paragraph{Label generation:}
Labels $Y$ were sampled according to
\begin{equation}
    Y\sim \text{Bernoulli}\left(\left(1+e^{-0.3(-L-D+I+S+IS)}\right)^{-1}\right).
\end{equation}
Note that this label generation process only depends on loan duration and amount, income and savings, but not on gender, age or education level.

\clearpage
\section{Derivation of a Monte-Carlo estimator for the gradient of the variance}
\label{app:derivation_of_variance_grad}
We now derive an estimator for the gradient of the square-root of the variance (i.e., standard deviation) of $h$ over the interventional or counterfactual distribution of $\X_{\d(\I)}$ w.r.t.\ $\th$, which appears (multiplied by $\lambda_\LCB$) in the threshold $\texttt{tresh}(a)$ of the optimisation constraint/regulariser.

First, we use the chain rule of differentiation to write
\begin{align}
\label{eq:app_gradient_standard_deviation}
\nabla_\th \sqrt{\mathbb{V}_{\X_{\d(\I)}}\left[h\left(\X_{\d(\I)}, \th, \xF_{\nd(\I)}\right)\right]}
&= \frac
{\nabla_\th \mathbb{V}_{\X_{\d(\I)}}\left[h\left(\X_{\d(\I)}, \th, \xF_{\nd(\I)}\right)\right]}
{2\sqrt{\mathbb{V}_{\X_{\d(\I)}}\left[h\left(\X_{\d(\I)}, \th, \xF_{\nd(\I)}\right)\right]}}
\end{align}

Next, we write the variance as expectation and---assuming the interventional or counterfactual distribution of $\X_{\d(\I)}$ admits reparametrisation as is the case for the \gpscm\ and \cvae\ models used in this paper---use the reparametrisation trick to differentiate through the expectation operator as in \eqref{eq:reparametrisation_trick}.
\begin{align}
\nabla_\th
&\mathbb{V}_{\X_{\d(\I)}}
\Big[
h\big(\X_{\d(\I)}, \th, \xF_{\nd(\I)}\big)
\Big]\\
&= \nabla_\th \E_{\X_{\d(\I)}}\left[
\left(
h\left(\X_{\d(\I)}, \th, \xF_{\nd(\I)}\right)
-\E_{\X_{\d(\I)}'}
\Big[
h\left(\X_{\d(\I)}', \th, \xF_{\nd(\I)}\right)
\Big]
\right)^2
\right]\\
&=
\nabla_\th \E_{\z\sim\N(\0,\Id)}
\left[
\Big(
h\left(\X_{\d(\I)}(\z;\th), \th, \xF_{\nd(\I)}\right)
-\E_{\z'\sim\N(\0,\Id)}
\Big[
h\left(\x_{\d(\I)}(\z';\th), \th, \xF_{\nd(\I)}\right)
\Big]
\Big)^2
\right]\\
&=
\E_{\z\sim\N(\0,\Id)}\left[
\nabla_\th
\Big(
h\left(\X_{\d(\I)}(\z;\th), \th, \xF_{\nd(\I)}\right)
-\E_{\z'\sim\N(\0,\Id)}
\Big[
h\left(\x_{\d(\I)}(\z';\th), \th, \xF_{\nd(\I)}\right)
\Big]
\Big)^2
\right]\\
&=
\E_{\z\sim\N(\0,\Id)}
\Bigg[
2
\bigg(
h\left(\X_{\d(\I)}(\z;\th), \th, \xF_{\nd(\I)}\right)
-\E_{\z'\sim\N(\0,\Id)}
\Big[
h\left(\x_{\d(\I)}(\z';\th), \th, \xF_{\nd(\I)}\right)
\Big]
\bigg)\Bigg.
\\
& \Bigg.\times
\bigg(
\nabla_{\th}  h\left(\X_{\d(\I)}(\z;\th), \th, \xF_{\nd(\I)}\right)
-\E_{\z'\sim\N(\0,\Id)}
\Big[
\nabla_{\th}
h\left(\x_{\d(\I)}(\z';\th), \th, \xF_{\nd(\I)}\right)
\Big]
\bigg)
\Bigg]
\end{align}

We can now obtain an estimate of the gradient with two independent sets of Monte Carlo samples of $\X_{\d(\I)}$, drawn via reparametrisation from the interventional or counterfactual distribution,
\begin{equation}
\{\x_{\d(\I)}^{(m)}:= \x_{\d(\I)}(\z^{(m)};\th)\}_{m=1}^M, \quad \{\x_{\d(\I)}^{(m')}:= \x_{\d(\I)}(\z^{(m')};\th)\}_{m'=1}^{M'} \quad \text{where} \quad \z^{(m)},\z^{(m')}\overset{\text{i.i.d.}}{\sim} \N(\0,\Id).
\end{equation}

This yields the following Monte Carlo gradient estimator of the variance:
\begin{align}
\nabla_\th
\mathbb{V}_{\X_{\d(\I)}}
&
\Big[
h\big(\X_{\d(\I)}, \th, \xF_{\nd(\I)}\big)
\Big]
\approx \frac{1}{M}\sum_{m=1}^M
\Bigg[
2
\bigg(
h\left(\x_{\d(\I)}^{(m)}, \th, \xF_{\nd(\I)}\right)
-\frac{1}{M'}\sum_{m'=1}^{M}
h\left(\x_{\d(\I)}^{(m')}, \th, \xF_{\nd(\I)}\right)
\bigg)\Bigg.
\\
& \Bigg.\times
\bigg(
\nabla_{\th}  h\left(\x_{\d(\I)}^{(m)}, \th, \xF_{\nd(\I)}\right)
-\frac{1}{M'}\sum_{m'=1}^{M'}
\nabla_{\th}
h\left(\x_{\d(\I)}^{(m')}, \th, \xF_{\nd(\I)}\right)
\bigg)
\Bigg]
\end{align}

Substituting the above expression, together with the following Monte Carlo estimate of the (undifferentiated) variance
 \begin{equation}
    \mathbb{V}_{\X_{\d(\I)}}\left[h\left(\X_{\d(\I)}, \th, \xF_{\nd(\I)}\right)\right]
    \approx
    \frac{1}{M-1}\sum_{m=1}^M 
    \bigg(
    h\left(\x_{\d(\I)}^{(m)}, \th, \xF_{\nd(\I)}\right)
    - 
    \frac{1}{M}\sum_{m'=1}^{M'}
    h\left(\x_{\d(\I)}^{(m')}, \th, \xF_{\nd(\I)}\right)
    \bigg)^2,
 \end{equation}
into \eqref{eq:app_gradient_standard_deviation} gives the desired estimate for the gradient of the standard deviation of $h$.

\end{document}